\documentclass{article}
\usepackage{textcomp}
\usepackage{arxiv}

\usepackage[utf8]{inputenc}
\usepackage[T1]{fontenc}
\usepackage{lmodern}

\usepackage{amsmath,amssymb,mathtools,bm}
\usepackage{enumitem,booktabs,graphicx,tabularx}
\usepackage{algorithm}
\usepackage[noend]{algpseudocode}

\usepackage{xcolor}
\definecolor{accent}{HTML}{0E5A63}
\newcommand{\orcidid}{0000-0003-1904-8867}          % <-- replace with your ORCID
\newcommand{\projecturl}{\url{github.com/milosen/aif\_mbpo}}   % <-- replace with project URL

\usepackage{amsthm}
\usepackage{thmtools}
\usepackage{mdframed}
\usepackage{thm-restate}

\declaretheoremstyle[
  headfont=\normalfont\bfseries\color{accent},
  notefont=\normalfont\bfseries\color{accent},
  bodyfont=\itshape,
  spaceabove=8pt, spacebelow=8pt,
  mdframed={
    linewidth=1.5pt, linecolor=accent,
    topline=false, rightline=false, bottomline=false,
    innerleftmargin=12pt, innerrightmargin=8pt,
    innertopmargin=6pt, innerbottommargin=6pt,
    backgroundcolor=accent!4,
  },
]{accentplain}
\declaretheoremstyle[
  headfont=\normalfont\bfseries\color{accent},
  notefont=\normalfont\bfseries\color{accent},
  bodyfont=\normalfont,
  spaceabove=8pt, spacebelow=8pt,
  mdframed={
    linewidth=1.5pt, linecolor=accent,
    topline=false, rightline=false, bottomline=false,
    innerleftmargin=12pt, innerrightmargin=8pt,
    innertopmargin=6pt, innerbottommargin=6pt,
    backgroundcolor=accent!4,
  },
]{accentdef}

\declaretheorem[style=accentplain, name=Theorem]{theorem}
\declaretheorem[style=accentplain, name=Lemma]{lemma}
\declaretheorem[style=accentplain, name=Proposition]{proposition}

\declaretheorem[style=accentdef,   name=Definition]{definition}
\declaretheorem[style=accentdef,   name=Assumption]{assumption}
\declaretheorem[style=accentdef,   name=Remark]{remark}

\usepackage[colorlinks=true,
            linkcolor=accent, citecolor=accent,
            urlcolor=accent, filecolor=accent]{hyperref}
\usepackage[capitalize]{cleveref}
\crefname{proposition}{Proposition}{Propositions}
\crefname{lemma}{Lemma}{Lemmas}
\crefname{theorem}{Theorem}{Theorems}
\crefname{definition}{Definition}{Definitions}
\crefname{assumption}{Assumption}{Assumptions}
\crefname{remark}{Remark}{Remarks}
\crefname{corollary}{Corollary}{Corollaries}

\usepackage{todonotes}

\makeatletter
\renewcommand\@seccntformat[1]{\textcolor{accent}{\csname the#1\endcsname}\quad}
\makeatother

\makeatletter
\providecommand{\theHALG@line}{}\renewcommand{\theHALG@line}{\thealgorithm.\arabic{ALG@line}}
\newcommand{\LComment}[1]{\Statex \(\triangleright\) \textit{#1}}
\makeatother

\newcommand{\E}{\mathbb{E}}\newcommand{\KL}{D_{\mathrm{KL}}}
\newcommand{\cS}{\mathcal{S}}\newcommand{\cA}{\mathcal{A}}\newcommand{\cO}{\mathcal{O}}\newcommand{\cD}{\mathcal{D}}
\newcommand{\cF}{\mathcal{F}}\newcommand{\cG}{\mathcal{G}}\newcommand{\cH}{\mathcal{H}}
\newcommand{\cM}{\mathcal{M}}
\newcommand{\cX}{\mathcal{X}}\newcommand{\cY}{\mathcal{Y}}
\newcommand{\cZ}{\mathcal{Z}}\newcommand{\cT}{\mathcal{T}}\newcommand{\epref}{\tilde{p}}

\title{Active Inference as a Convex Markov Decision Process}

\author{%
  Nikola Milosevic\thanks{Corresponding author: \texttt{nmilosevic@cbs.mpg.de}. ORCID \orcidid.}\\
  Max Planck Institute for Human Cognitive and Brain Sciences\\
  Leipzig, Germany\\
  \texttt{nmilosevic@cbs.mpg.de}
  \And
  Nicol\'as Hinrichs\\
  Max Planck Institute for Human Cognitive and Brain Sciences\\
  Leipzig, Germany\\
  \texttt{nhinrichs@cbs.mpg.de}
  \And
  Nico Scherf\\
  Max Planck Institute for Human Cognitive and Brain Sciences\\
  Leipzig, Germany\\
  \texttt{nscherf@cbs.mpg.de}
}
\date{\today}

\renewcommand{\shorttitle}{Active Inference as a Convex MDP}
\renewcommand{\undertitle}{Preprint \,\textbullet\, Accepted at IWAI 2026}

\usepackage{fancyhdr}
\begin{document}
\maketitle

\begin{abstract}
Active Inference (AIF) frames adaptive behavior as the minimization of expected free energy (EFE), combining epistemic and pragmatic objectives within a single variational principle. We frame AIF as policy optimization and show that, for closed-loop control policies, EFE minimization can be formulated as a convex Markov decision process (MDP). In this formulation, the pragmatic terms are linear in the predictive state marginals and therefore equivalent to reward maximization in a latent MDP, while the epistemic value introduces a nonlinear component that distinguishes EFE minimization from standard reinforcement learning. This perspective further reveals the epistemic drive of active inference as a policy-dependent (performative) reward.
We analyze finite-horizon, discounted, and average-reward formulations of EFE and derive a mirror descent (MD) algorithm that locally linearizes the objective around the current state marginals, yielding a policy-dependent reward that is compatible with actor-critic methods and dynamic programming. Finally, we argue that coupling world-model learning with policy optimization gives active inference the structure of performative reinforcement learning, providing a route toward grounding active inference within modern reinforcement learning and optimization theory, including convergence analysis and principled policy improvement guarantees.
\end{abstract}

\keywords{active inference \and expected free energy \and convex MDP \and mirror descent \and reinforcement learning}

%======================================================================
\section{Introduction}\label{sec:intro}
%======================================================================

\begin{figure}[t]
\centering
\includegraphics[width=\textwidth]{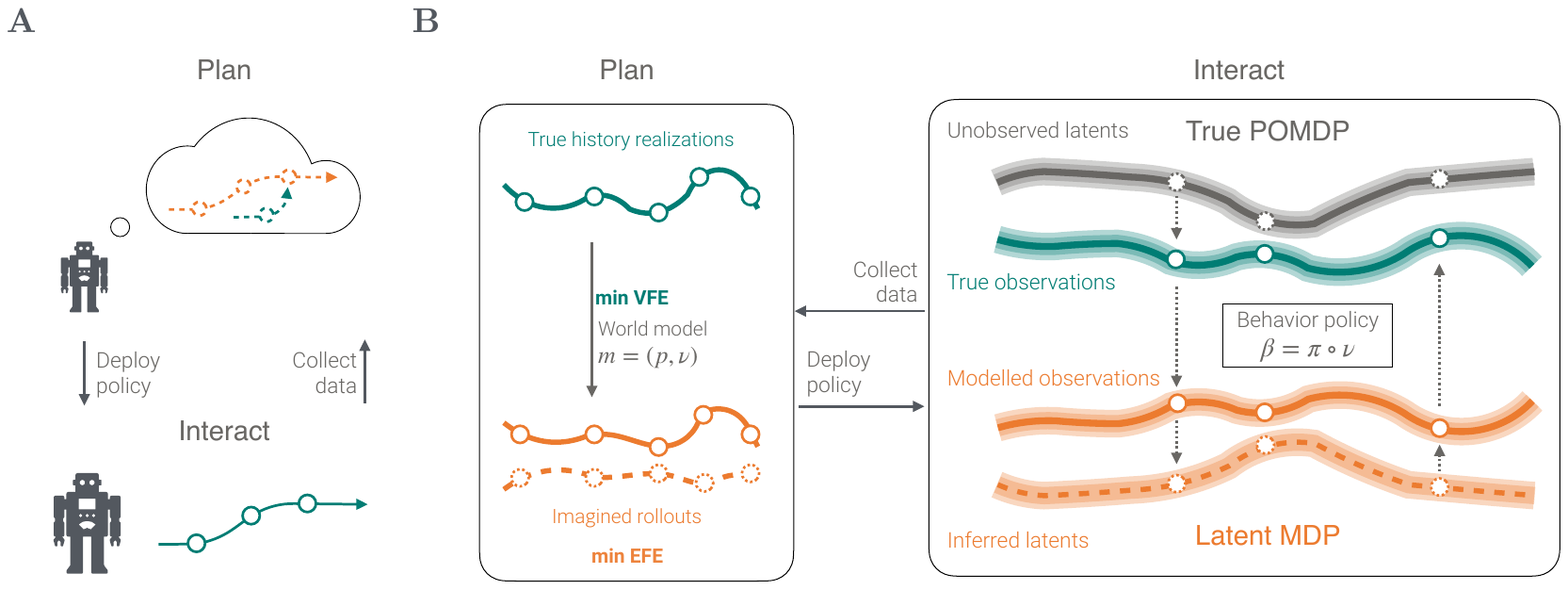}
\caption{The episodic active inference setting considered in this work: A) the environment admits a true partially observable decision process, while the agent learns a latent fully observable process using variational inference. B) In contrast to standard variational inference, the two decision processes are coupled: the variational distribution $\nu$ is used both in variational model learning and in action selection through the behavior policy $\beta = \pi \circ \nu$. The agent acts in the environment using the behavior policy $\beta = \pi \circ \nu$ for multiple episodes, then uses the data to fit its model by Variational Free Energy (VFE) minimization to obtain $m=(p,\nu)$, and uses that model to optimize its policy in imagination by minimizing the Expected Free Energy (EFE).}
\label{fig:setting}
\end{figure}

Model-based policy optimization (MBPO) lets an agent learn a world model from
environment interaction and then optimize a reactive policy on imagined rollouts
inside that model. In reinforcement
learning (RL), the properties of MBPO are well understood: algorithms converge
under suitable assumptions, and policy-improvement guarantees can be stated for
the expected reward~\cite{janner2019mbpo}.

Active Inference (AIF)~\cite{dacosta2020,friston2015active} can also be read as
MBPO. The agent learns a generative model of the environment, then selects actions by minimising an expected free energy (EFE) that
scores imagined future outcomes against a preference distribution.
Implementations use similar architectural components as recent world model-based RL methods~\cite{ha2018worldmodels,hafner2020,mazzaglia2021}: a recurrent world model, an actor, and a critic, trained on imagined environment interactions. The optimization properties of EFE minimization, however, are far less understood than those of
reward-maximizing MBPO. It is not known whether the EFE minimum exists, whether it is unique, or what kind of optimization problem the combined VFE--EFE minimization constitutes. This blocks the transfer of efficient algorithms, and
their convergence and policy-improvement guarantees, from the RL literature to
the simulation of AIF agents.

AIF is formulated with inference, not policy optimization, in mind. To bring it
within reach of the RL toolbox we make two modifications to the standard
protocol (cf.~\cite{dacosta2020}). First, we take an \emph{episodic} view: rather
than updating on every environment step, VFE and EFE minimization each run over
one or more full episodes while the other's target is held fixed, see \cref{fig:setting}. This is only a mild
departure since standard AIF already freezes the model and variational posterior
during planning when it imagines future timesteps. We simply extend that
separation to the data-collection phase. Second, in place of the mean-field
posterior over the future we use a causal one, so that
imagination is Markov and can be realised by ancestral sampling, meaning the agent
can sample its own imagined rollouts efficiently and in a physically meaningful manner.

With these two modifications in place, our central claim is that EFE
minimization under a fixed model is a \emph{convex MDP}%
~\cite{zahavy2021reward,zhang2020variational}, the recent generalisation of RL in
which the objective can be an arbitrary convex functional of the state--action marginal at future timesteps rather than a linear one (as with expected reward). The per-step EFE is exactly of this form. Its extrinsic-value
and ambiguity terms are linear in the occupancy, while the epistemic value, the
negative state-marginal entropy, supplies a single convex term such that the whole of convex-MDP theory applies. At the exact-posterior limit this epistemic term coincides with the mutual information between states and observations, recovering the familiar information-gain reading of active inference. We return to this in \cref{sec:convex}. In particular, convex MDPs admit extended
dynamic-programming methods that remain scalable and sample-efficient.

Finally, we show that allowing the agent to refit the model from data of the real environment significantly complicates the optimization problem. Active inference sits at the underexplored intersection of convex MDPs and performative reinforcement learning~\cite{perdomo2020,mandal2023,rank2024}, where the reward and the environment dynamics respond to the agent's deployed policy.

We take the observation-preference reading of the EFE as primary: it carries the mutual-information interpretation of epistemic value, and because the same recognition density scores both imagined inference and online inference, it makes the planning objective self-consistent with the agent's perception.

\paragraph{Contributions.}
\begin{enumerate}[label=(\roman*),leftmargin=2em]
  \item \textbf{Structure.} We present the episodic policy optimization setting for AIF, where the EFE, without modification,
    becomes a convex functional of the imagined step-$t$ state marginal, splitting into linear
    terms (a latent-MDP reward) and a single convex nonlinearity (the epistemic
    value). This gives EFE minimization the structure of a convex MDP.
  \item \textbf{Algorithm and rate.} We solve EFE minimization with a mirror-descent-inspired algorithm that yields a policy-dependent reward (a fixed reward, and a variable curiosity bonus). The resulting subproblem is
    a combination of soft RL~\cite{haarnoja2018soft} and maximum entropy exploration~\cite{hazan2019maxent}, solved in closed form by softmax dynamic
    programming. Our MD scheme (MD-AIF) converges at rate $O(1/K)$
    by relative smoothness.
  \item \textbf{Closing the loop.} Refitting the world model to policy-induced
    data makes the scheme performative~\cite{perdomo2020}. We show that a performatively stable policy--model pair exists under mild assumptions, and sketch a path towards a performative AIF algorithm.
\end{enumerate}

% =====================================================================
\section{
Episodic Active Inference
}\label{sec:setting}
% =====================================================================
We consider a finite-horizon reward-free POMDP
$\mathcal{M} = (\cS, \cA, \cO, P, E, \sigma)$, where $s_t\in\cS, a_t\in\cA$, and
$o_t\in\cO$ are states, actions, and observations at time
$t\in[T]=\{0,\dots,T\}$, with $\cS,\cA,\cO$ finite. $P$ and $E$ are transition and emission kernels with full support and $\sigma$ is a start state distribution. 

For the sake of clarity, we consider the memoryless finite-horizon setting in the main text. However, our results extend to the history-dependent and infinite horizon settings, see \S\ref{sec:extensions}. A memoryless behavior policy $\beta:\cO\to\Delta(\cA)$
induces the environment trajectory law
\begin{equation}\label{eq:env-traj}
p^\beta(\tau)=\sigma(s_0)E(o_0 | s_0)\prod_{t=0}^{T-1}\,\beta(a_t | o_t)P(s_{t+1} | s_t,a_t)E(o_{t+1} | s_{t+1}).
\end{equation}

An \emph{active inference agent} consists of a variational world model $m=(p_\theta,\nu_\phi)$ with model parameters $\theta$ and variational parameters $\phi$. We work primarily in the tabular parametrization setting and drop parameter indexes following~\cite{agarwal2021theory}. Further, the agent deploys its behavior policy in the real environment by
chaining latent-state inference and action selection. Throughout, we consider composed behavior policies $\beta=\pi\circ\nu$, where memoryless policies are of the form
$\beta(a_t | o_t)=\sum_{s_t}\pi(a_t | s_t)\,\nu(s_t | o_t),$
i.e.\ the agent first infers the latent state through the recognition density $\nu$ and then acts through the latent-state-conditioned policy
$\pi$ used in EFE optimization.

We consider an episodic model-based policy-optimization setting in which the
agent alternates between
\begin{enumerate}[leftmargin=*]
    \item \emph{Model learning:} explore the environment using $\beta$, collect
    trajectory data $\cD$ over several episodes, and fit the world model
    $m=(p,\nu)$ by minimising variational free energy (VFE) on the
    data,
    \item \emph{Policy optimization:} improve the behavior policy by
    minimising expected free energy (EFE) on imagined trajectory data with
    respect to the latent policy $\pi$, using the latest world model.
\end{enumerate}

\noindent One full pass through the active inference loop is
\begin{equation}\label{eq:full-loop}
     \pi\xmapsto{\ \mathrm{deploy}\ }\cD(\pi)\xmapsto{\ \text{VFE}\ }m(\cD(\pi))\xmapsto{\ \text{EFE}\ }\pi\big(m(\cD(\pi))\big) =: \mathrm{AIF}(\pi),
\end{equation}
and a fixed point satisfies $\pi^*=\mathrm{AIF}(\pi^*)$. The VFE and EFE steps can either be run to completion or interleaved episodically. 
Crucially, the latent policy space $\Pi=\prod_{t}(\Delta_\cA)^{\cS}$ does not depend on the
model and the natural question is under which conditions
repeated application of $\mathrm{AIF}$ admits such a fixed point and whether efficient algorithms exist that converge to it.

\subsection{VFE: World Model Training}

The model $m=(p,\nu)$ that the EFE phase plans against is the
output of a preceding perception/learning phase: minimization of VFE on data
gathered in the real environment. We record the VFE in the form that matches
the EFE below, with the same recognition density, so that the full loop~\eqref{eq:full-loop} is well posed.

\paragraph{Generative model and recognition.}
With actions treated as given, the agent's generative model over a trajectory
$\tau=(s_0,o_0,a_0,\dots,a_{T-1},s_T,o_T)$ factorises as
\begin{equation}\label{eq:gen-model}
p(o_{0:T},s_{0:T} | a_{0:T-1})=p(s_0)p(o_0 | s_0)\prod_{t=0}^{T-1}p(o_{t+1} | s_{t+1})p(s_{t+1} | s_t,a_t),
\end{equation}
and states are inferred with the history-conditioned recognition density\\ $\nu(s_{0:T} | o_{0:T},a_{0:T-1})=\prod_{t=0}^{T}\nu(s_t | o_t)$.
Again, under direct parametrization, we drop the parameter indexes and consider the memoryless case. The history-dependent case is discussed in \cref{sec:extensions}.

\paragraph{VFE.}
Let $\cD(\pi)$ be the distribution of histories $h_T=(o_{0:T},a_{0:T-1})$ obtained
by running $\beta=\pi\circ\nu$ in the real environment. The variational free
energy of $m$ on data $\cD$ 
is
\begin{equation}\label{eq:vfe-surprise}
\cF(m;\cD)=\underbrace{\E_{h\sim\cD}\big[-\log p(o_{0:T} | a_{0:T-1})\big]}_{\text{surprise}}
+\underbrace{\E_{h\sim\cD}\,\KL\big(\nu(s_{0:T} | h)\,\|\,p(s_{0:T} | h)\big)}_{\text{recognition gap}\,\ge\,0},
\end{equation}
where $p(s_{0:T} | o,a)$ is the exact Bayesian posterior under~\eqref{eq:gen-model}. The bound is tight precisely when $\nu$ equals that posterior. This
non-negative recognition gap is the same quantity that controls EFE accuracy. VFE-optimality of $\nu$ is what makes the EFE phase score against an accurate posterior. The model-learning phase targets
\begin{equation}\label{eq:vfe-min}
    m^*(\pi) \in \arg\min_m\ \cF(m;\cD(\pi)),
\end{equation}
where the optimum $m^*$ generally depends on the policy $\pi$ used to generate the trajectory data via $\beta=\pi\circ\nu$.

\subsection{EFE: Planning in a Latent MDP}
Given a model $m=(p,\nu)$, a latent policy $\pi$ induces the purely imagined trajectory law
\begin{equation}\label{eq:imag-traj}
q^\pi(\tau;m)=q_0(s_0)\,p(o_0 | s_0)\prod_{t=0}^{T-1}\pi_t(a_t | s_t)\,p(s_{t+1} | s_t,a_t)\,p(o_{t+1} | s_{t+1}),
\end{equation}
with predictive state marginals $\rho_t^\pi(s)=\mathbb{P}_{q^\pi}(s_t=s)$ of $q^\pi$ at time $t$ satisfying the recursion
\begin{equation}\label{eq:state-recursion-app}
    \rho_{t+1}^\pi(s')=\sum_{s,a}p(s' | s,a)\,\pi_t(a | s)\,\mu_t^\pi(s,a),\quad \mu_t^\pi(s,a) \coloneqq  \rho_t^\pi(s)\pi_t(a|s)
\end{equation}
where $\rho_0^\pi=q_0$ is given. For example, it can be an additional factor $q_0(s_0) = \nu(s_0)$ in the variational posterior or derived from the recognition density $q_0(s_0) = \nu(s_0 | o_0)$ where $o_0\sim\cD$ is sampled from a dataset (cf. Dreamer~\cite{hafner2020}).
Sampling is ancestral: the agent acts causally in imagination. Then we define the AIF planning problem as follows.

Given an active inference agent with model $m$, a (biased) observation preference $\epref$, and a latent trajectory distribution $q^\pi(\cdot;m)$, define the finite-horizon EFE by
\begin{align}\label{eq:efe-general}
\cG(\pi;m)&=\sum_{t=0}^T G_t(\pi;m) = \sum_{t=0}^T\E_{q^\pi}\!\big[\,g_t(s_t,o_t)\,\big],\\
g_t(s_t,o_t)&=\underbrace{\log \rho_t^\pi(s_t)}_{\text{predictive marginal}}\underbrace{-\log\nu(s_t | o_t) - \log p(o_t|s_t) - \log\epref(o_t)}_{=:-\log\tilde p(s_t,o_t)\ \text{(biased model)}},
\end{align}
where $\nu$ is the agent's recognition density.
The planning (policy optimization) problem is
\begin{equation}
    \pi^*(m) \in \arg \min_\pi \mathcal{G}(\pi;m).
\end{equation}
This particular definition of the EFE is one of many variants used throughout the literature~\cite{dacosta2020,parr2020markov,millidge2021whence}. 
It is the one that appears in policy optimization~\cite{mazzaglia2021,mazzaglia2022free}, designed for consistency with the agent's behavior policy ($\beta = \pi\circ\nu$) and tractability in simulation.

\paragraph{The epistemic value as mutual information.}
The intrinsic value is the negative state-marginal entropy $-\mathcal H(\rho_t)$, which under \cref{eq:efe-general} appears in place of the mutual information because the recognition density $\nu$
is held fixed. If $\nu$ is replaced by the exact model posterior $p(s | o)$ (the idealised case, attained at VFE convergence~\eqref{eq:vfe-surprise}), this term becomes the negative mutual information $-I_t(S;O)$ between states and observations at time $t$. 
This is the classical epistemic/information-gain value of active inference~\cite{parr2020markov,millidge2021whence}. The two readings coincide exactly at EFE convergence, so the convex-MDP structure preserves the information-theoretic interpretation of the EFE. Only the nonlinearity is named differently, depending on whether the recognition density is variational or exact.

\paragraph{Convex-MDP structure.}
We show in \S\ref{sec:convex} that $\min_\pi\cG(\pi;m)$ is a convex MDP for fixed $m$, solvable by repeated dynamic programming with policy-dependent rewards. These rewards are precisely the EFE gradients with respect to the state-marginal of trajectory law~\eqref{eq:imag-traj}. Note that this structure is independent of the exact EFE variant, see Appendix~\ref{app:memoryless-variants}, of history dependence, see \S\ref{sec:extensions}, and of parameter uncertainty, since the model and variational parameters are fixed during planning. 

\section{
EFE Minimization is a Convex MDP
}\label{sec:convex}
Our main structural result is that EFE minimization is a convex MDP under a fixed given model $m=(p,\nu)$. This can be read off immediately from the objective, once the EFE is brought into its state-action marginal form.

\begin{lemma}
We write $\mu^\pi = (\mu_t^\pi)_{t=0}^T$. The EFE \cref{eq:efe-general} can be written as
\begin{equation}\label{eq:gamma-noac}
    \cG(\pi)= \Gamma(\mu^\pi)\coloneqq\;\langle\ell,\mu^\pi\rangle+\Phi(\mu^\pi),
\end{equation} where $\ell \coloneqq \ell_t(s,a)=-\mathbb{E}_{o\sim p(\cdot|s)}\log\epref(s,o)$, $\Phi(\mu^\pi) = -\sum_t\mathcal H(\rho^\pi_t)$ is the neg. marginal Shannon entropy, and
$\langle\cdot,\cdot\rangle$ is the Euclidean inner product on
$\mathbb R^{\cS\times\cA\times[T]}$.
\end{lemma}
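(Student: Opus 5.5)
The plan is to expand each per-step term $G_t(\pi;m)=\E_{q^\pi}[g_t(s_t,o_t)]$ using the factorisation of the imagined trajectory law \eqref{eq:imag-traj}. Marginalising $q^\pi$ onto $(s_t,a_t,o_t)$ gives
\[
q^\pi(s_t,a_t,o_t)=\rho_t^\pi(s_t)\,\pi_t(a_t\mid s_t)\,p(o_t\mid s_t)=\mu_t^\pi(s_t,a_t)\,p(o_t\mid s_t).
\]
This holds because $o_t$ depends only on $s_t$, and $a_t$ is drawn from $\pi_t(\cdot\mid s_t)$ independently of $o_t$ given $s_t$. Since the integrand $g_t$ does not depend on $a_t$, inserting the factor $\sum_a\pi_t(a\mid s)=1$ is harmless. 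It lets every expectation be written against $\mu_t^\pi$ rather than $\rho_t^\pi$, with $\rho_t^\pi(s)=\sum_a\mu_t^\pi(s,a)$. At the last step $t=T$ there is no action in the trajectory, so I would fix an arbitrary dummy $\pi_T$ (e.g.\ uniform), which leaves $\rho_T^\pi$ unchanged.

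Next I split $g_t$ into its two pieces. The biased-model piece is
\[
\E_{q^\pi}\big[-\log\tilde p(s_t,o_t)\big]=\sum_{s,a}\mu_t^\pi(s,a)\,\E_{o\sim p(\cdot\mid s)}\big[-\log\tilde p(s,o)\big]=\langle \ell_t,\mu_t^\pi\rangle.
\]
Here $\ell_t(s,a)$ is constant in $a$, and the model $m$ is fixed, so $\ell$ does not depend on $\pi$. Summing over $t$ gives $\langle \ell,\mu^\pi\rangle$ on $\mathbb R^{\cS\times\cA\times[T]}$. The predictive-marginal piece is
\[
\E_{q^\pi}\big[\log\rho_t^\pi(s_t)\big]=\sum_s\rho_t^\pi(s)\log\rho_t^\pi(s)=-\mathcal H(\rho_t^\pi),
\]
where $\rho_t^\pi$ is the linear image $\sum_a\mu_t^\pi(\cdot,a)$. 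Summing over $t$ yields $\Phi(\mu^\pi)$, and adding the two pieces gives $\cG(\pi)=\Gamma(\mu^\pi)$.

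The only delicate point is bookkeeping, not analysis. The quantity $\log\rho_t^\pi$ itself depends on $\pi$, so it must not be treated as a fixed reward. Instead it is recognised as the self-information whose expectation under the same marginal is the entropy, and this is exactly what makes $\Phi$ nonlinear; composing with the linear map $\mu\mapsto\rho$ gives convexity for later use. I would also note that full support of $p$ keeps $\tilde p(s,o)$ positive, provided $\nu$ and $\epref$ have full support. With this, $\ell$ is finite and the convention $0\log 0=0$ handles states with zero predictive mass.
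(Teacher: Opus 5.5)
Your proposal is correct and follows essentially the same route as the paper: write each $G_t$ as an expectation under the step-$t$ marginal $\rho_t^\pi(s)\,p(o\mid s)$, expand $\rho_t^\pi=\sum_a\mu_t^\pi(\cdot,a)$, and split into the linear cost $\langle\ell_t,\mu_t^\pi\rangle$ and the negative entropy $-\mathcal H(\rho_t^\pi)$. Your dummy $\pi_T$ at the final step and your full-support remark are bookkeeping details the paper leaves implicit.
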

The inner product is linear in $\mu^\pi$ whenever the preference distribution $\tilde p$ is independent of $\mu^\pi$ and $\Phi$ is (not strictly) convex, since $-\mathcal H$ is strictly convex, but $-\mathcal H(\sum_a\mu^\pi(\cdot,a))$ is only convex in $\mu^\pi$.

\subsection{Planning as a Convex MDP}
While $\cG$ can be written as a functional of some $\mu_t\in\Delta(\cS\times\cA)$, the marginals can not move freely on the entire simplex, since they are fully determined by $\pi$ through \cref{eq:imag-traj}. Like the state-marginals, the state-action marginals $\mu_t^\pi$ that arise from some policy $\pi$ must satisfy flow constraints for all $t\in[T{-}1]$ and $\ s'\in\cS$:
\begin{align}
\sum_a \mu_0^\pi(s,a) &= q_0(s),\quad
\sum_{a'}\mu_{t+1}^\pi(s',a') = \sum_{s,a} p(s' | s,a)\,\mu_t^\pi(s,a). \label{eq:flow}
\end{align}
These constraints are a result of our choice of planning distribution, which turns imagination into a Markov process.

\begin{definition}[Occupancy polytope~\cite{puterman1994,moreno2024efficient}]\label{def:polytope}
Let \begin{equation}\label{eq:convex-program-def}
    \mathcal{K}=\{(\mu_t)_{t=0}^T\in(\Delta_{\cS\times\cA})^{T+1} |
\eqref{eq:flow}\text{ holds}\},
\end{equation} which is a compact convex polytope. Every $\mu\in\mathcal{K}$
corresponds to a unique policy $\pi_t(a|s)=\mu_t(s,a)/\sum_{a'}\mu_t(s,a')$ wherever $\rho_t(s)>0$.
\end{definition}

Due to the one-to-one correspondence of $\pi$ and $\mu^\pi$ (specifically, $\pi\mapsto \mu^\pi$ is a bijection onto the interior of $\mathcal{K}$), the problems
$\min_\pi\cG(\pi)$ and $\min_{\mu\in\mathcal{K}}\Gamma(\mu)$ are equivalent.

\begin{restatable}[EFE minimization is a convex MDP]{proposition}{PropConvex}\label{prop:convex}
Finite-horizon EFE minimization under a fixed model is a finite-horizon convex MDP, meaning 
\begin{equation}\label{eq:convex-program}
    \min_\pi\cG(\pi) \equiv \min_{\mu\in\mathcal{K}}\Gamma(\mu),
\end{equation}
where $\mu^\pi=(\mu^\pi_t)_t$ is the collection of all step-$t$ state-action marginals under $q^\pi$ and \cref{eq:convex-program} is a convex program over policy-induced state-action marginals, i.e.\ a convex MDP~\cite{zahavy2021reward,moreno2024efficient}.
\end{restatable}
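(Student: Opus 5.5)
The proof has three parts. First, rewrite $\cG$ as a functional of $\mu^\pi$ using the preceding Lemma. Second, identify the set of achievable $\mu^\pi$ with the occupancy polytope $\mathcal{K}$ of \cref{def:polytope}. Third, show that $\Gamma$ is convex on $\mathcal{K}$.

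\textbf{Step 1 (objective).} Under $q^\pi$ in \eqref{eq:imag-traj}, the pair $(s_t,o_t)$ has law $\rho_t^\pi(s)p(o|s)$, and $\rho_t^\pi(s)=\sum_a\mu_t^\pi(s,a)$. Each term of $G_t$ is then an expectation over $(s,o)$. The terms $-\log\tilde p(s_t,o_t)$ integrate against $p(o|s)$ to a function $\ell_t(s)$, which we lift to $\ell_t(s,a)$ constant in $a$; this gives the linear part $\langle\ell,\mu^\pi\rangle$. The term $\E[\log\rho_t^\pi(s_t)]$ equals $-\mathcal H(\rho_t^\pi)$, which gives $\Phi$. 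This is exactly the Lemma, which I invoke directly.

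\textbf{Step 2 (feasible set).} I show $\{\mu^\pi:\pi\in\Pi\}=\mathcal{K}$, possibly up to closure.
\begin{itemize}
\item[($\subseteq$)] The recursion \eqref{eq:state-recursion-app} together with $\rho_0^\pi=q_0$ gives the flow constraints \eqref{eq:flow}. Each $\mu_t^\pi$ is a probability distribution.
\item[($\supseteq$)] Given $\mu\in\mathcal{K}$, define $\pi_t(a|s)=\mu_t(s,a)/\sum_{a'}\mu_t(s,a')$ where the denominator is positive, and arbitrarily otherwise. Then show by induction on $t$ that $\mu_t^\pi=\mu_t$. The base case uses $\sum_a\mu_0=q_0$. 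The inductive step uses the flow constraint to get $\rho_{t+1}^\pi=\sum_{a'}\mu_{t+1}(\cdot,a')$, and then the definition of $\pi_{t+1}$ to recover $\mu_{t+1}$. States with $\rho_t(s)=0$ contribute zero, so arbitrary choices there are harmless.
\end{itemize}
Hence $\min_\pi\Gamma(\mu^\pi)=\min_{\mu\in\mathcal{K}}\Gamma(\mu)$. The set $\mathcal{K}$ is convex and compact because it is an intersection of a product of simplices with affine equalities.

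\textbf{Step 3 (convexity).} The linear term is convex. For $\Phi$, the map $\mu\mapsto\rho_t=\sum_a\mu_t(\cdot,a)$ is linear, and $-\mathcal H$ is convex on $\Delta_\cS$, with the convention $0\log0=0$ so that it is continuous on the closed simplex. A convex function composed with a linear map is convex, and a sum of convex functions is convex. So $\Gamma$ is a convex, continuous function on a compact convex polytope. A minimizer therefore exists, and the problem is a convex program of the form studied in~\cite{zahavy2021reward,moreno2024efficient}.

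\textbf{Main obstacle.} The delicate point is the boundary of $\mathcal{K}$. There $\rho_t(s)=0$ for some states, so the policy is non-unique and $\log\rho_t$ is singular. I would handle this with two facts. First, the expectation $\E_{q^\pi}[\log\rho_t^\pi(s_t)]$ only charges states with $\rho_t^\pi(s)>0$, so it equals $-\mathcal H(\rho_t^\pi)$ with $0\log0=0$. Second, the map $\pi\mapsto\mu^\pi$ is onto all of $\mathcal{K}$, not just its interior, so the equivalence of minima holds without passing to closures.
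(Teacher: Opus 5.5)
Your proposal is correct and follows the paper's route. You rewrite $\cG$ as $\Gamma(\mu^\pi)$ via the occupancy lemma, get convexity from a linear term plus $-\sum_t\mathcal H$ composed with the linear marginalisation $\mu_t\mapsto\rho_t$, and identify the feasible set with the convex polytope $\mathcal K$, which the paper only cites. Your explicit induction showing that $\pi\mapsto\mu^\pi$ is onto all of $\mathcal K$, with $\pi_t(\cdot|s)$ arbitrary where $\rho_t(s)=0$, is sharper than the paper's remark that it is a bijection onto the interior: it yields equality of the minima directly and handles boundary occupancies, with no closure or continuity argument needed.
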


\begin{proof}
By \cref{lem:gamma-convex}, $\Gamma$ is convex. Further, $\mathcal K$ is a convex polytope~\cite{puterman1994,moreno2024efficient}, thus \cref{eq:convex-program} is a convex program in state-action occupancies, exactly the convex MDP (concave-utility) problem of~\cite{moreno2024efficient}.
\end{proof}

Optimization problems of this kind are referred to as convex~\cite{zahavy2021reward}, general-utility~\cite{zhang2020variational,santos2025the}, or concave-utility MDPs~\cite{moreno2024efficient} in the RL literature. Specifically, EFE minimization has the structure of maximum-entropy exploration~\cite{hazan2019maxent} combined with standard reward maximization, here in the episodic/finite-horizon setup studied by~\cite{moreno2024efficient}.
Convex MDPs are more general than standard MDPs. While standard MDPs also have a convex program representation, the objective there is linear, and the nonlinear case does not admit global value functions. However, specialized dynamic programming methods exist, which we will exploit in the algorithmic development of \S\ref{sec:md-aif}.

\subsection{History-Dependent and Infinite-Horizon Extensions}\label{sec:extensions}
So far we have discussed the finite-horizon memoryless setting. However, this was mostly for clarity, and our results readily apply to history-dependent and infinite-horizon formulations. To restore the convex MDP structure on the history process, one must augment the state space with the space of histories. Note that this state is analogous to the imagined state of the Dreamer~\cite{hafner2020,mazzaglia2021} architecture.
\begin{table}[t]
\centering
\small
\setlength{\tabcolsep}{6pt}
\renewcommand{\arraystretch}{1.3}
\caption{Infinite-horizon extensions of the EFE objective. The two axes correspond to whether entropy is computed after temporal aggregation of occupancies or before it, and whether the temporal aggregation is discounted or stationary.}
\label{tab:infinite_horizon_efe}
\begin{tabularx}{\linewidth}{
    >{\centering\arraybackslash}m{2.3cm}
    >{\centering\arraybackslash}X
    >{\centering\arraybackslash}X
}
\toprule
& \textbf{Discounted} & \textbf{Stationary}\\
\textbf{time $\to$ state}
&
$\displaystyle
\begin{aligned}
d_\gamma^\pi &= (1-\gamma)\sum_{t=0}^{\infty}\gamma^t\mu_t^\pi\\[2pt]
\mathcal G_{\gamma}^{\mathrm{agg}}(\pi) &= \langle \ell, d_\gamma^\pi\rangle - \mathcal H(d^\pi)
\end{aligned}$
&
$\displaystyle
\begin{aligned}
d_\infty^\pi &= \lim_{T\to\infty}\frac1T\sum_{t=0}^{T-1}\mu_t^\pi\\[2pt]
\mathcal G_{\infty}^{\mathrm{agg}}(\pi) &= \langle \ell, d_\infty^\pi\rangle - \mathcal H(d_\infty^\pi)
\end{aligned}$
\\
\textbf{state $\to$ time}
&
$\displaystyle
\mathcal G_{\gamma}^{\mathrm{step}}(\pi) = (1-\gamma)\sum_{t=0}^{\infty}\gamma^t G_t(\pi)$
&
$\displaystyle
\mathcal G_{\infty}^{\mathrm{step}} = \lim_{T\to\infty}\frac1T\sum_{t=0}^{T-1} G_t$
\\
\bottomrule
\end{tabularx}
\end{table}

\begin{restatable}[History augmentation]{lemma}{LemmaHist}
The history-dependent version of \cref{eq:efe-general} given by \cref{eq:efe-occ-hist} induces a convex MDP on the augmented state $x=(h,s)$.
\end{restatable}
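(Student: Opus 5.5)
The plan is to reduce the history-dependent EFE to the memoryless case on an enlarged, still finite, state space, and then reuse \cref{prop:convex} verbatim. Since $\cS,\cA,\cO$ are finite and $T<\infty$, the set $\cH_t$ of histories $h_t=(o_{0:t},a_{0:t-1})$ is finite for each $t$. The augmented state at time $t$ is $x_t=(h_t,s_t)\in\cX_t:=\cH_t\times\cS$.

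\textbf{Step 1: augmented process.} Define the augmented transition kernel
\[
\bar p(x_{t+1} | x_t,a_t)=p(s_{t+1} | s_t,a_t)\,p(o_{t+1} | s_{t+1})\,\mathbb 1\{h_{t+1}=(h_t,a_t,o_{t+1})\}
\]
and the initial law $\bar q_0(x_0)=q_0(s_0)p(o_0 | s_0)$. A history-dependent latent policy $\pi_t(a | s_t,h_t)$ is then a memoryless policy $\pi_t(a | x_t)$ on $\cX_t$. The imagined trajectory law of the history-dependent model is exactly the Markov chain generated by $(\bar q_0,\bar p,\pi)$, so the augmented state-action marginals $\bar\mu_t(x,a)$ obey the flow constraints \eqref{eq:flow} with $p$ replaced by $\bar p$. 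The corresponding set $\bar{\mathcal K}$ is a compact convex polytope, as in \cref{def:polytope}, now over the finite space $\cX_t\times\cA$.

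\textbf{Step 2: the objective as a functional of $\bar\mu$.} Starting from \cref{eq:efe-occ-hist}, I would split the per-step integrand into
\begin{itemize}
\item terms such as $-\log\nu(s_t | h_t)-\log p(o_t | s_t)-\log\epref(o_t)$, which depend only on $x_t$ (and $o_t$ is a component of $h_t$), and
\item the predictive-marginal term $\log\rho_t(\cdot)$.
\end{itemize}
The first group contributes $\langle\bar\ell,\bar\mu\rangle$ with $\bar\ell_t(x,a)$ fixed, because $\nu$, $p$ and $\epref$ are held fixed during planning. The second group depends on which marginal enters. If it is $\rho_t(s)$, this is a linear image $\rho_t=A_t\bar\mu_t$ (summing over $h$ and $a$). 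If it is $\rho_t(s | h)$ or the joint $\rho_t(h,s)$, it is again a linear map or a linear-fractional one.

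\textbf{Step 3: convexity.} For a linear image, $-\mathcal H(A_t\bar\mu_t)$ is convex as a composition of a convex function with a linear map, by the same argument as \cref{lem:gamma-convex}. For the conditional case, the expectation $\E\log\rho_t(s | h)$ equals $-\mathcal H(S_t | H_t)=-\mathcal H(H_t,S_t)+\mathcal H(H_t)$. This is a difference of entropies, so it needs care. I would write it as $\sum_h\rho(h)\sum_s\rho(s | h)\log\rho(s | h)$, i.e.\ the perspective of the convex function $-\mathcal H$ applied to the pair $(\rho_t(h,\cdot),\rho_t(h))$. Perspectives of convex functions are jointly convex, and $(\rho_t(h,\cdot),\rho_t(h))$ is linear in $\bar\mu_t$, so the composite is convex. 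The conclusion then follows exactly as in the proof of \cref{prop:convex}: a convex $\bar\Gamma$ minimized over the polytope $\bar{\mathcal K}$, with the bijection between policies and interior occupancies carried over.

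I expect the main obstacle to be Step 3: identifying precisely which marginal \cref{eq:efe-occ-hist} uses and verifying convexity when it is a conditional rather than a plain marginal. The perspective argument is my first attempt. A secondary issue is the fixed-$\nu$ caveat: if the recognition density were taken as the exact posterior, it would become policy-dependent and break linearity, so I would stress that it is held fixed.
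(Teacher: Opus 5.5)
Your proposal is correct and takes essentially the paper's route. Your Step 1 is exactly \cref{lem:aug}, with the same augmented kernel and initial law on $\cX=\cH\times\cS$; after that, the paper just observes that \eqref{eq:efe-occ-hist} has the memoryless form with $s$ replaced by $x$ and invokes \cref{prop:convex}. The ambiguity you raise in Step 3 is settled by \eqref{eq:efe-occ-hist} itself: it uses the joint history--state marginal $\rho_t^\pi(x)=\sum_a\bar\mu_t(x,a)$, so your linear-image case applies directly, and your perspective argument for the conditional marginal, though valid, is not needed.
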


Generalizing to infinite horizons is also possible, however, it comes with modeling choices. Due to the nonlinearity of the objective, one must choose how to combine state and time averages, see~\cref{tab:infinite_horizon_efe}. The key distinction is that the entropy operator does not commute with time aggregation, but we must aggregate time in the infinite-horizon formulation to avoid infinite-dimensional decision variables. By concavity of $\mathcal{H}$ and Jensen's inequality,
\begin{equation}
    \cH(d^\pi) \geq (1-\gamma)\lim_{T\to\infty}\sum_{t=0}^{T}\gamma^t\cH(\mu_t^\pi),\quad d^\pi(s,a) = (1-\gamma)\lim_{T\to\infty}\sum_{t=0}^{T}\gamma^t\mu_t^\pi(s,a),
\end{equation}
with equality only when all $\mu_t^\pi$ are identical. Details on existence and algorithmic consequences are given in Appendix~\ref{app:infinite}, see also~\cite{santos2025the}.

\section{
Solving EFE Minimization by Soft RL
}\label{sec:md-aif}

The convex nonlinearity of the EFE makes the per-step reward depend on the policy's
state-marginal, so a single value function cannot capture the objective globally. Convex MDP methods resolve this by linearising the objective at each iterate. Policy optimization algorithms based on mirror descent~\cite{beck2003mirror,lan2023pmd,moreno2024efficient} are particularly well-suited for EFE minimization: every step
replaces the EFE by its first-order surrogate around the current state-marginal, turning the convex MDP into a sequence of ordinary soft-MDP problems whose reward is recomputed between iterations.
The reason why this works is that MD lets us choose a Bregman divergence $D_\Psi$ through the convex generator $\Psi$ in its update
\begin{equation}\label{eq:md}
    \mu^{k+1}\in\arg\min_{\mu\in\mathcal{K}}\big\{\langle\nabla\Gamma(\mu^k),\mu\rangle+\eta^{-1}D_\Psi(\mu\,\|\,\mu^k)\big\},
\end{equation}
where $\eta$ is a stepsize. We choose
\begin{equation}
D_\Psi(\mu\|\mu^k)=\sum_{t,s}\rho_t(s)\KL(\pi^\mu_t(\cdot | s)\|\pi^{\mu^k}_t(\cdot | s)),
\end{equation}
where $\Psi$ is the negative conditional entropy, see Appendix~\ref{app:md} and \cite{moreno2024efficient} for details. With this choice, \cref{eq:md} has a closed-form solution that is exactly the soft-Bellman backup used in \cref{alg:mdaif}.

\cref{alg:mdaif} is the active-inference instance of MD-CURL~\cite[Alg.~2]{moreno2024efficient}, which is a convex
MDP solver using mirror descent~\cite{beck2003mirror}. Here it is specialized by the EFE objective, whose nonlinearity is the state-marginal negative entropy $\Phi$. The linearized per-iteration reward $r^k=-\nabla\Gamma(\mu^k)$ carries the state-marginal term $-\log \rho_t^k(s)-1$ and the static preference reward $-\ell_t$.

\begin{algorithm}[!ht]
\caption{MD-AIF}
\label{alg:mdaif}
\begin{algorithmic}[1]
\Require world model $m=(p,\nu)$; preference $\tilde p_t$; horizon $T$; step size $\eta$; init. belief $q_0$; current policy $\bar\pi$.
\Ensure $\text{MD-AIF}(\pi^{k}; m, \tilde p_t, T, \eta, q_0)$ computes the update \cref{eq:md}.
  \State $\mu_0 \gets q_0$;
  \For{$t=0,\dots,T-1$}
     \State $\mu_{t+1}(s')\gets\sum_{s,a}p(s' | s,a)\,\bar\pi(a | s)\,\mu_t(s)$
  \EndFor
  \LComment{linearized reward: negative EFE gradient $r=-\nabla\Gamma(\mu)$}
  \For{all $t,s,a$}
     \State $r_t(s,a)\gets\mathbb{E}_{o}\log \tilde p(s,o)\;-\,\log \rho_t(s)-1$
  \EndFor
  \LComment{Backward pass: soft (free-energy) value iteration}
  \State $V_{T+1}(\cdot)\gets 0$
  \For{$t=T,\dots,0$}
     \State $Q_t(s,a)\gets r_t(s,a)+\sum_{s'}p(s' | s,a)\,V_{t+1}(s')$
     \State $V_t(s)\gets\eta^{-1}\log\sum_a\bar\pi(a | s)\exp\!\big(\eta\, Q_t(s,a)\big)$
  \EndFor
  \LComment{Mirror step: multiplicative (softmax) policy update}
  \For{all $t,s$}
     \State $\pi_t(a | s)\gets\dfrac{\bar\pi(a | s)\exp\!\big(\eta\, Q_t(s,a)\big)}{\sum_{a'}\bar\pi(a' | s)\exp\!\big(\eta\, Q_t(s,a')\big)}$
  \EndFor
\State \Return $\pi_t$ for all $t$
\end{algorithmic}
\end{algorithm}

\paragraph{Natural Policy Gradient of the EFE.}
With the conditional-negative-entropy generator $\Psi$, MD-AIF approximates natural
gradient descent on the EFE over the policy manifold~\cite{amari1998natural}. The proximal term of~\eqref{eq:md} is, to second order, the squared Riemannian norm
\begin{equation}\label{eq:fisher-metric}
    D_\Psi(\mu\,\|\,\mu^k)=\tfrac12\,\big\|\pi^\mu-\pi^{\mu^k}\big\|_{\mathbf F^k}^2+O(\|\delta\|^3),
    \quad
    \mathbf F^k=\bigoplus_{t,s} \rho_t^{\pi}(s)\,\mathrm{diag}(\pi_t^k(\cdot |s))^{-1}
\end{equation}
and the step
$
    \pi_t^{k+1}(\cdot | s)
    =\pi_t^k(\cdot | s)+\eta_k\,F_t^k(s)^{-1}\,Q_t^k(s,\cdot)+O(\eta_k^2)
$
is precisely the first-order expansion of the multiplicative update of \cref{alg:mdaif}. MD-AIF is therefore an efficient implementation of Kakade's natural policy gradient (NPG;~\cite{kakade2001natural}) on the EFE, see also~\cite{raskutti2015information,amari1998natural} for the general duality between MD and NPG.

\begin{restatable}[Convergence of MD-AIF]{proposition}{PropRatePsi}\label{prop:rate-psi}
Run Algorithm~\ref{alg:mdaif} for $K$ steps with a fixed model $m$
and step size $\eta=1/L$ from a full-support $\mu^0\in\mathcal{K}$. Then $\Gamma$ is $L$-smooth relative
to $\Psi$ on the flow polytope with constant $L=\tfrac12 T(T+1)$, and
\[
\min_{0\le k\le K}\cG(\pi^k;m)-\cG(\pi^\star;m)\ \le\ \frac{L\,D_\Psi(\mu^\star\|\mu^0)}{K+1}\ =\ O(1/K),
\]
where $D_\Psi$ is the Bregman divergence generated by $\Psi(\mu)=\sum_{t,s,a}\mu_t(s,a)\log\pi^\mu_t(a|s)$.
\end{restatable}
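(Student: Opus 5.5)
The proof will follow the relative-smoothness analysis of mirror descent (Bauschke--Bolte--Teboulle; Lu--Freund--Nesterov), as used for MD-CURL in~\cite{moreno2024efficient}. The argument has three parts. First, convexity of $\Gamma$ on $\mathcal K$, already established in \cref{prop:convex}. Second, $L$-smoothness of $\Gamma$ relative to $\Psi$ on $\mathcal K$, meaning
\[
\Gamma(\mu)\le\Gamma(\mu')+\langle\nabla\Gamma(\mu'),\mu-\mu'\rangle+L\,D_\Psi(\mu\|\mu').
\]
Third, the standard three-point descent lemma for the update \cref{eq:md} with $\eta=1/L$. This lemma gives the telescoping bound $\Gamma(\mu^{k+1})-\Gamma(\mu^\star)\le L\big(D_\Psi(\mu^\star\|\mu^k)-D_\Psi(\mu^\star\|\mu^{k+1})\big)$. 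Summing over $k$, and using that $\Gamma(\mu^k)$ is nonincreasing or simply bounding the minimum by the average, yields the $L D_\Psi(\mu^\star\|\mu^0)/(K+1)$ rate. The bijection $\pi\leftrightarrow\mu^\pi$ then transfers the bound to $\cG$.

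Before that, I will check that \cref{alg:mdaif} computes \cref{eq:md} exactly. The linear term $\langle\ell,\mu\rangle$ has zero Bregman remainder, so only $\Phi$ contributes. A full-support $\mu^0$, together with the multiplicative updates, keeps every iterate in the relative interior. This makes $\nabla\Gamma$, and hence the reward $-\log\rho_t-1$, finite along the trajectory.

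The key step, and the main obstacle, is the relative-smoothness inequality for $\Phi$. Since $D_\Phi(\mu\|\mu')=\sum_t\KL(\rho_t\|\rho'_t)$, the task reduces to
\[
\sum_{t=0}^T \KL(\rho_t\|\rho'_t)\le L\sum_{t,s}\rho_t(s)\KL(\pi_t(\cdot|s)\|\pi'_t(\cdot|s)).
\]
To prove this I will apply the chain rule for KL to the imagined trajectory laws $q^\pi$ and $q^{\pi'}$, which share $q_0$ and the kernel $p$. This gives
\[
\KL(q^\pi_{0:t}\|q^{\pi'}_{0:t})=\sum_{u<t}\E_{\rho_u}\KL(\pi_u\|\pi'_u).
\]
By data processing, $\KL(\rho_t\|\rho'_t)$ is at most this quantity, so $\KL(\rho_t\|\rho'_t)\le\sum_{u<t}\E_{\rho_u}\KL(\pi_u\|\pi'_u)\le D_\Psi(\mu\|\mu')$. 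Summing over $t=0,\dots,T$ (the $t=0$ term vanishes) gives the bound with constant $T$. Weighting each policy term by the number of later steps it affects, $\sum_u (T-u)\le \tfrac12T(T+1)$, recovers the stated $L$. The part that needs care is the identity $D_\Psi(\mu\|\mu')=\sum_{t,s}\rho_t(s)\KL(\pi_t\|\pi'_t)$ for the generator $\Psi(\mu)=\sum\mu\log\pi^\mu$ restricted to $\mathcal K$. This holds because $\Psi$ equals the negative conditional entropy, so its gradient produces exactly the conditional-KL form on the polytope. Any affine discrepancy from the flow constraints cancels between the two points, since both lie in $\mathcal K$.

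Finally, I will note that $\Psi$ is convex, because negative conditional entropy is convex on $\mathcal K$ via the perspective of $\mu_t\log\mu_t$ minus $\rho_t\log\rho_t$, which is concave-corrected. I will also check that $D_\Psi(\mu^\star\|\mu^0)<\infty$, which holds because $\mu^0$ has full support. These conditions ensure the three-point lemma applies, completing the argument.
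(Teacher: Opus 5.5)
Your proof is correct, but it proves the key relative-smoothness inequality by a different route than the paper.

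\textbf{The paper's route.} The paper argues infinitesimally. It identifies $\nabla^2\Gamma[\delta\mu]=\sum_t\|\delta\rho_t\|^2_{\rho_t}$ and $\nabla^2\Psi[\delta\mu]=\sum_{t,s}\rho_t(s)\|\delta\pi_t(\cdot|s)\|^2_{\pi_t}$ as the marginal and conditional blocks of the per-step Fisher metric, via $R=\Phi+\Psi$. It then linearizes the flow to write $\delta\rho_t=\sum_{t'<t}K_{t'\to t}w_{t'}$ and applies the $\chi^2$ Markov contraction of \cref{lem:chi2} to each source term. Finally it recombines the $t$ sources with the triangle inequality and Cauchy--Schwarz. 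This costs a factor $t$, hence $\sum_t t=\tfrac12T(T+1)$, and the resulting Hessian domination on the tangent space is read as relative smoothness.

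\textbf{Your route.} You prove the finite-separation inequality $D_\Phi(\mu\|\mu')\le L\,D_\Psi(\mu\|\mu')$ directly. You use the KL chain rule on $q^\pi,q^{\pi'}$ (initial law, transitions and emissions cancel), then data processing down to the marginal of $s_t$. This gives exactly the inequality the three-point lemma uses, with no Hessian-to-global step.

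It is also sharper. The chain rule adds the per-step policy contributions exactly: infinitesimally, the per-step score terms are martingale differences and hence orthogonal. That orthogonality is precisely what the paper's triangle/Cauchy--Schwarz step throws away. Your argument therefore yields $L=T$.

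Your sentence invoking $\sum_u(T-u)\le\tfrac12T(T+1)$ muddles this. The relevant weight is $\max_u(T-u)=T$. The stated constant then follows simply because $T\le\tfrac12T(T+1)$ and relative smoothness is monotone in $L$, so $\eta=1/L$ with the larger $L$ is still admissible.

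What the paper's route buys in exchange is an explicit Fisher-metric picture, which underwrites its natural-gradient reading of MD-AIF.

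\textbf{Three smaller points.}
\begin{enumerate}
\item The identity $D_\Psi(\mu\|\mu')=\sum_{t,s}\rho_t(s)\KL(\pi_t(\cdot|s)\|\pi'_t(\cdot|s))$ holds on the whole positive orthant, because $\partial\Psi/\partial\mu_t(s,a)=\log\pi_t(a|s)$. No cancellation from the flow constraints is involved.
\item Your convexity argument for $\Psi$ (``concave-corrected'') is not an argument as written. The clean reason is joint convexity of $(x,y)\mapsto x\log(x/y)$ composed with the linear map $\mu\mapsto(\mu_t(s,a),\rho_t(s))$. In any case, nonnegativity of $D_\Psi$ is already visible from its KL form.
\item Telescoping over $K$ steps gives $\sum_{k=1}^{K}\big(\Gamma(\mu^k)-\Gamma(\mu^\star)\big)\le L\,D_\Psi(\mu^\star\|\mu^0)$. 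That is a bound $L\,D_\Psi(\mu^\star\|\mu^0)/K$, not $/(K+1)$. The $K+1$ in the statement is an indexing slip; the relative-smoothness theorem the paper cites also gives $1/k$ after $k$ steps. For $K=0$ it would assert $\Gamma(\mu^0)-\Gamma(\mu^\star)\le L\,D_\Psi(\mu^\star\|\mu^0)$. This fails once $\ell$ is scaled up: $L$ does not depend on $\ell$, and $D_\Psi(\mu^\star\|\mu^0)\le(T+1)\log|\cA|$ for a uniform initial policy.
\end{enumerate}
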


\begin{proof}[sketch]
The proof relies on showing that the algorithm solves the mirror descent update exactly for the chosen Bregman divergence $D_\Psi(\mu||\mu')$ and that $\Gamma$ is relatively smooth w.r.t. $\Psi$. One then invokes the relative smoothness bound \cite{lfn2018} for the rate.
\end{proof}

\section{Performative Active Inference}
\subsection{Closing the Loop}\label{sec:performative}
The fixed-model results above assume the planner sees a model that does not change as the
policy changes. Closing the active inference loop~\eqref{eq:full-loop} breaks that
assumption: after deploying $\pi$ the agent refits its world model to the data that $\pi$
itself induced, so the reward $\ell_\pi$ and the augmented dynamics $P_\pi$ that the next
planning phase optimizes against are \emph{functions of the deployed policy}. This is exactly
the structure of performative reinforcement learning~\cite{perdomo2020,mandal2023,rank2024},
in which the trajectory distribution reacts to the deployed decision rule.

A single round of the closed loop maps a deployed policy to its model-conditioned mirror
step. Writing $m_\pi=m^\star(\cD(\pi))$ for the VFE-refit model and $Q^\pi$ for the soft
action-value produced by the backward recursion of \cref{thm:dp} under $m_\pi$, the
\emph{performative mirror operator} $\mathcal{P}:\Pi\to\Pi$ (\cref{def:md-operator}) is the softmax
update
\begin{equation}\label{eq:md-operator-main}
  \mathcal{P}(\pi)_t(a | x)\;=\;
  \frac{\pi_t(a | x)\,\exp\!\big(\eta\,Q^\pi_t(x,a)\big)}
       {\sum_{b}\pi_t(b | x)\,\exp\!\big(\eta\,Q^\pi_t(x,b)\big)} ,
\end{equation}
and a \emph{performatively stable} policy is a fixed point $\pi^\star=\mathcal{P}(\pi^\star)$: a policy
that is already optimal for the model its own deployment produces. Existence follows from a
standard fixed-point argument once the loop is continuous and the policy simplex compact.

\begin{proposition}[Existence, informal; see \cref{prop:exist-brouwer}]\label{prop:exist-main}
Under a support floor on the model class and continuity of the policy-to-model map, the performative mirror operator $\mathcal{P}$ has a fixed point
$\pi^\star=\mathcal{P}(\pi^\star)$.
\end{proposition}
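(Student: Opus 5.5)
The plan is to apply Brouwer's fixed-point theorem to $\mathcal{P}$ on the policy set $\Pi=\prod_t(\Delta_\cA)^{\cS}$, which is a finite product of simplices. It is therefore nonempty, compact and convex. Since \cref{eq:md-operator-main} is a softmax reweighting of $\pi$, it maps $\Pi$ into $\Pi$. The support floor is needed because the multiplicative update preserves zeros of $\pi$, and $Q^\pi$ contains $\log\rho^\pi_t$ terms that blow up when $\rho_t^\pi(s)\to0$.

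First I would restrict to a slightly smaller set. Let $\Pi_\epsilon=\{\pi\in\Pi:\pi_t(a|x)\ge\epsilon\ \forall t,a,x\}$, which is still compact, convex and nonempty for $\epsilon\le 1/|\cA|$. The support floor on the model class says $p(s'|s,a)\ge\delta>0$, $\nu(s|o)\ge\delta$, $q_0\ge\delta$ and $\epref\ge\delta$ for every refit model $m_\pi$. Under it, every $\pi\in\Pi$ gives $\rho^\pi_t(s)\ge\delta$ for $t\ge1$, since $\rho_{t+1}^\pi(s')=\sum_{s,a}p(s'|s,a)\mu_t(s,a)\ge\delta$, and $\rho_0=q_0\ge\delta$. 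Hence the reward $r_t=\E_o\log\epref-\log\rho_t-1$ is bounded by a constant $R$ independent of $\pi$. The backward recursion then gives $|Q^\pi_t|\le (T+1)(R+R')$ uniformly, with $R'$ bounding the soft-value terms. So the exponential weights lie in $[e^{-\eta C},e^{\eta C}]$, and $\mathcal{P}(\pi)_t(a|x)\ge \pi_t(a|x)e^{-2\eta C}$.

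This lower bound is not enough for self-mapping, since multiplying the floor by $e^{-2\eta C}$ shrinks it. This is the main obstacle. My first attempt would be the standard fix of mixing with the uniform policy. I would read the support floor as including a policy floor, so that the operator is $\mathcal{P}_\epsilon(\pi)=(1-|\cA|\epsilon)\mathcal{P}(\pi)+\epsilon\mathbf 1$, which maps $\Pi$ into $\Pi_\epsilon$. Alternatively, I would work directly on $\Pi$ and note that continuity on all of $\Pi$ already suffices for Brouwer, because the uniform bound on $Q^\pi$ holds for every $\pi\in\Pi$: the floor on $p$ and $q_0$ controls $\rho$ regardless of $\pi$. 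I would take this second route. Brouwer only needs $\mathcal{P}:\Pi\to\Pi$ continuous, and the invariance of zeros is harmless because a policy with zeros is still mapped into $\Pi$.

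The remaining step is continuity of $\pi\mapsto\mathcal{P}(\pi)$, which I would get by composing continuous maps. The map $\pi\mapsto m_\pi$ is continuous by assumption. The map $(\pi,m)\mapsto\mu^{\pi}(m)$ is polynomial by the recursion \cref{eq:state-recursion-app}. The map $(\mu,m)\mapsto r$ is continuous because $\log$ is continuous on $[\delta,1]$, where the floor keeps all arguments. The backward soft recursion is built from sums, $\exp$ and $\log\sum\pi\exp$, and the last is continuous on $\Pi\times$ bounded $Q$ because $\sum_a\pi(a)e^{\eta Q}\ge e^{-\eta C}>0$. The softmax ratio has a denominator bounded below by the same quantity. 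Brouwer then yields $\pi^\star=\mathcal{P}(\pi^\star)$.

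For the formal version (\cref{prop:exist-brouwer}) I would remark that the fixed point may be degenerate, for instance deterministic. To rule this out I would invoke the $\epsilon$-mixed variant and let $\epsilon\to0$ along a convergent subsequence, using compactness of $\Pi$.
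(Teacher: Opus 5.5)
Your argument is correct and is the paper's own route: Brouwer on the product of simplices $\Pi$, with continuity of $\mathcal{P}$ obtained by composing the continuous refit map with the polynomial occupancy recursion, the log-reward, the soft backward pass and the softmax. Your observation that the transition floor forces $\rho^\pi_t\ge\delta$ on all of $\Pi$ is what keeps the log-reward finite there; the paper leaves this implicit, and it makes the $\Pi_\epsilon$ detour unnecessary, as you concluded. Your closing aside is not needed and would not achieve its aim: the multiplicative update preserves zeros, so every deterministic policy is already a fixed point of $\mathcal{P}$, and a limit of interior fixed points of the $\epsilon$-mixed operator can itself lie on the boundary, so letting $\epsilon\to0$ cannot rule out degeneracy.
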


Existence does not imply i) that the retraining iterates \emph{reach} such a point rapidly, and ii) that a given fixed point has the desired properties, e.g.\ the model converges to the true environment dynamics. We instead hypothesize a simplified regime in which a guarantee could hold and demonstrate the active-inference feature that obstructs it. 

\subsection{Toward a Convergence Guarantee}
A clean case for full AIF convergence is i) well-specified tabular parametrization ii) with exact-recognition, and iii) state coverage under all deployed policies. If all three hold, the deployed policy only changes how frequently each transition is seen, never which are seen or what the model converges to. 
The refit target is then policy-independent and the coupled iteration is a one-way cascade of two contractions, the policy converging to the model-optimal solution.

However, the coverage assumption is unlikely to hold under diverse behavior policies in complex environments, where AIF is arguably most promising. The epistemic drive in the EFE actively pushes mass onto under-visited states and so helps coverage naturally, but it does not certify a uniform minimum of state coverage.
When coverage fails, the refit target becomes policy-dependent. Moreover, the parameter-based novelty terms in the EFE may induce complex $\pi$-dependence. Ultimately, the cascade can close into an unstable feedback loop. This is the regime of performative RL~\cite{perdomo2020,mandal2023}. We therefore leave a convergence analysis to future work, where natural routes are reduction to online learning via dataset aggregation~\cite{ross11} or the mixed delayed repeated retraining of~\cite{rank2024}. The experiments of \cref{sec:experiments} instead probe the closed loop with the simplifying assumptions: interleaving MD-AIF planning with fully-observed model refitting and measuring how fast the learned kernel approaches the truth.

\section{Experiments}\label{sec:experiments}

We test the two structural predictions of the convex-MDP view on deterministic
gridworlds: that MD-AIF (\cref{thm:dp}) converges at the $O(1/K)$ rate of
\cref{prop:rate-psi} with the policy-dependent reward driving
broad state coverage, and that closing the model-learning loop
accelerates identification of the environment. Environment, hyperparameter, and
baseline details are in App. \ref{app:experiments}.

\begin{figure}[t]
  \centering
  \includegraphics[width=\textwidth]{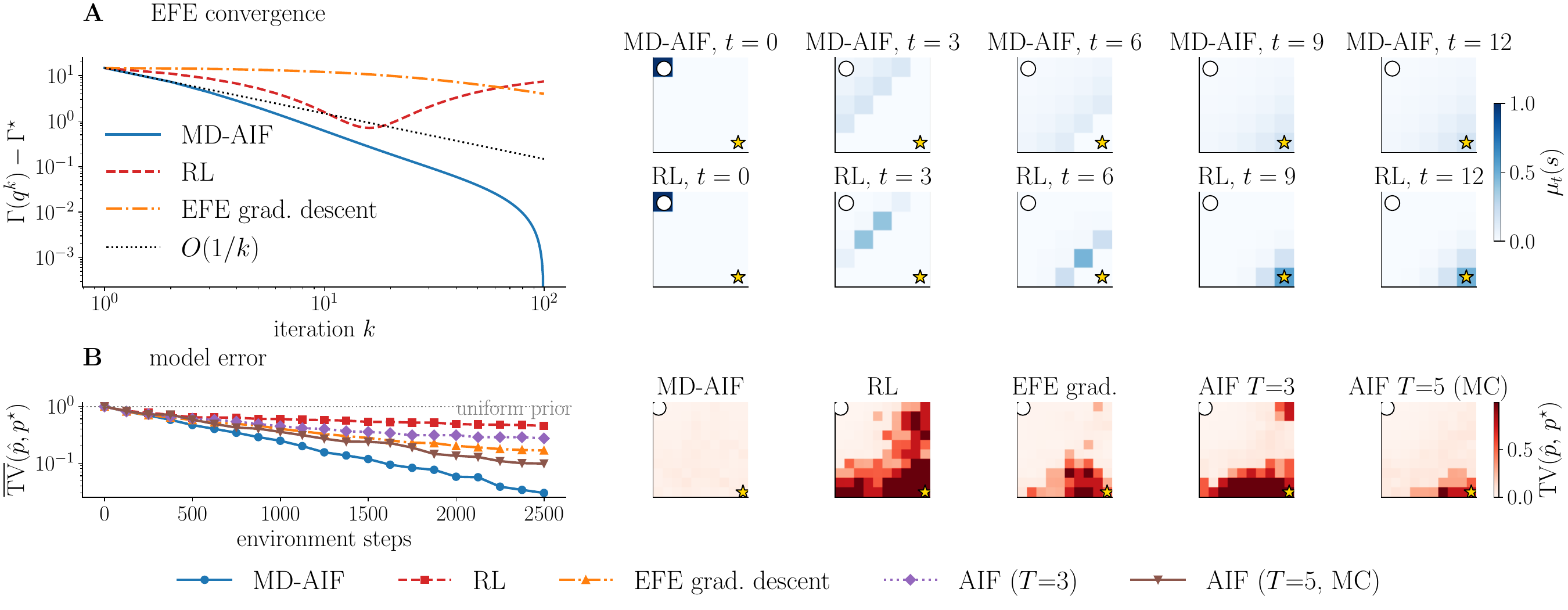}
  \caption{%
    \textbf{MD-AIF on gridworld environments.}
    \textbf{(A, top two rows)} $5\times5$ deterministic gridworld.
    \emph{Left:} EFE convergence (log--log scale)
    for MD-AIF, RL (no novelty), and EFE gradient descent, together with the theoretical $O(1/k)$ rate (dashed).
    MD-AIF converges faster and to a lower value, driven by regularized dynamic programming.
    \emph{Right:} Per-step imagined occupancy $\rho_t(s)$ at convergence ($k=100$) for MD-AIF (top) and RL (bottom) at five timesteps
    $t\in\{0,3,6,9,12\}$. MD-AIF spreads probability mass broadly across the grid before concentrating toward the goal, whereas RL greedily channels mass along the direct path. Both rows share the same color scale.
    \textbf{(B)} $10\times10$ deterministic gridworld with uniform preference (pure epistemic drive).
    \emph{Left:} Mean total-variation error
    $\overline{\mathrm{TV}}(\hat{p},p^\star)$ of the model fit as a function of environment steps under the interleaved model--policy loop.
    MD-AIF's information-gain drive produces broader state coverage,
    accelerating model learning relative to RL and EFE gradient descent.
    AIF agents~\cite{dacosta2020} plan myopically over short horizons ($T\!=\!3$ exact; $T\!=\!5$ via Monte Carlo with $N_{\rm mc}=100$ samples).
    \emph{Right:} Per-state model error
    $\mathrm{TV}(\hat{p},p^\star) = \mathbb{E}_{a\sim\pi_T}\,\mathrm{TV}(\hat{p}(\cdot|s,a),p^\star(\cdot|s,a))$
    after 2500 environment steps.
  }
  \label{fig:gridworld}
\end{figure}

\textbf{Convergence and occupancy.}
On a $5\times5$ grid with a Manhattan-distance preference, we compare MD-AIF with
entropy-regularised RL and Euclidean gradient descent on the EFE.
MD-AIF attains a lower EFE and tracks the theoretical $O(1/K)$ line
(\cref{fig:gridworld}A, left), while its imagined occupancy spreads across the grid
before concentrating on the goal, in contrast to the direct channel taken by RL
(\cref{fig:gridworld}A, right).

\textbf{Model learning.}
With a uniform preference, the EFE reduces to a pure epistemic drive. Interleaving
policy optimization with model refitting, the performative loop of MD-AIF yields broader coverage and faster reduction of the model error $\overline{\mathrm{TV}}(\hat p,p^\star)$ than RL, EFE gradient descent, or short-horizon EFE planning~\cite{dacosta2020}, see \cref{fig:gridworld}B.

\section{Conclusion}\label{sec:conclusion}
For closed-loop policies under a fixed model, expected free energy minimization is a convex MDP:
the pragmatic, ambiguity, and recognition terms form a linear latent-MDP reward, and the
epistemic value is the only nonlinearity, a convex negative state-marginal entropy. This places
EFE in the convex MDP class and makes its algorithms available, notably MD, which agrees with
the natural policy gradient on the EFE up to second order and converges at its standard $O(1/K)$ relative-smoothness rate. 
Refitting the model and the recognition density to policy-induced data closes the loop and makes it performative. Here we prove existence of a stable policy-model pair (\cref{prop:exist-brouwer}) and leave convergence to future work. 

\appendix
%======================================================================
\section*{Appendix}
%======================================================================

% =====================================================================
\section{The expected free energy variants}\label{app:memoryless-variants}

\begin{definition}[EFE variants]\label{def:efe-vars}
The general integrand~\eqref{eq:efe-general} specialises as follows.
\begin{itemize}[leftmargin=*]
\item \emph{Information gain (observation preference).} Setting $\epref_t(s_t,o_t) = p(s_t | o_t)\epref(o_t)$ in~\eqref{eq:efe-general} yields
\begin{equation}\label{eq:efe-ig} 
    g^\mathrm{IG}_t(s_t,o_t)\coloneqq\log \rho_t^\pi(s_t)-\log p(s_t | o_t)-\log\epref_t(o_t).
\end{equation}
This reflects an idealized planning scenario, where the agent can compute the Bayesian state posterior under its own model.
\item \emph{Approximate information gain.} Setting $\epref_t(s_t,o_t) = \nu(s_t | o_t)\epref(o_t)$ in~\eqref{eq:efe-general} yields
\begin{equation}\label{eq:efe-aig} 
    g^\mathrm{AIG}_t(s_t,o_t)\coloneqq\log \rho_t^\pi(s_t)-\log\nu(s_t | o_t)-\log\epref_t(o_t),
\end{equation}
which implements the same information gain term as above, but assuming variational inference is performed in the future, too.
\item \emph{Risk--ambiguity (state preferences).} Placing preferences on states, $\epref(s)$, and omitting recognition gives (cf.~\cite{dacosta2020})
\begin{equation}\label{eq:efe-ra}
    g^\mathrm{RA}_t(s_t,o_t)\coloneqq\log \rho_t^\pi(s_t)-\log p(o_t | s_t)-\log\epref(s_t).
\end{equation}
This is equivalent to setting $\epref_t(s_t,o_t) = p(o_t|s_t)\epref_t(o_t)$ in~\eqref{eq:efe-general}.
\item \emph{Action Complexity.} Optionally, add the action-complexity term to obtain
$$g^\mathrm{AC}_t(s_t,a_t,o_t)=g_t(s_t,o_t)+\log\tfrac{\pi_t(a_t | s_t)}{\bar\pi_t(a_t | s_t)}.$$
Action complexity is used in Control as Inference formulations of RL~\cite{haarnoja2018soft} and in deep learning based AIF implementations~\cite{mazzaglia2021} and it can be seen as a replacement for the policy prior in myopic AIF implementations~\cite{dacosta2020}. 
\end{itemize}
\end{definition}

% =====================================================================
\section{EFE minimization as a convex MDP}\label{app:efe}
% =====================================================================
We develop the occupancy form of the general EFE; the variants follow by
substituting their linear cost.

Using~\cref{eq:imag-traj} as the trajectory distribution, the
agent plans by acting in imagination and confines optimization to the
dynamically feasible occupancies (those satisfying the Chapman--Kolmogorov flow
constraints below). 
This is also the structure which allows us to write EFE minimization as a convex MDP.

\begin{lemma}\label{lem:gamma-convex}
We write $\mu^\pi = (\mu_t^\pi)_{t=0}^T$. The EFE \cref{eq:efe-general} can be written as
\begin{equation}\label{eq:efe-occ}
\cG(\pi;m)
=
\sum_{t=0}^T\sum_{s \in \mathcal S}\sum_{a \in \mathcal A}
\mu_t^\pi(s,a)
\left[
\log \rho_t^\pi(s)
-
\sum_{o \in \mathcal O}p(o|s)\log \tilde p(s, o)
\right],
\end{equation}
or compactly
\begin{equation}\label{eq:gamma-occ-app}
    \cG(\pi)=\;\langle\ell,\mu^\pi\rangle+\Phi(\mu^\pi),
\end{equation} where $\ell \coloneqq \ell_t(s,a)=-\mathbb{E}_{o\sim p(\cdot|s)}\log\epref(s,o)$, $\Phi(\mu^\pi) = -\sum_t\mathcal H(\rho^\pi_t)$ is the neg. marginal Shannon entropy, and
$\langle\cdot,\cdot\rangle$ is the Euclidean inner product on
$\mathbb R^{\cS\times\cA\times[T]}$.
\end{lemma}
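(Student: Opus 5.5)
The plan is to compute each summand $G_t(\pi;m)=\E_{q^\pi}[g_t(s_t,o_t)]$ by marginalising the imagined trajectory law~\eqref{eq:imag-traj} down to the pair $(s_t,o_t)$ (and $a_t$ where needed). This gives one term per time step in~\eqref{eq:efe-occ}.

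\textbf{Step 1: the joint marginal at time $t$.} Sum~\eqref{eq:imag-traj} over all variables other than $s_t,a_t,o_t$. The factors after time $t$ are conditional kernels and sum to one, and the factors before $t$ sum to $\rho_t^\pi(s_t)$ by the recursion~\eqref{eq:state-recursion-app}. This yields
\begin{equation*}
\mathbb P_{q^\pi}(s_t=s,a_t=a,o_t=o)=\rho^\pi_t(s)\,\pi_t(a|s)\,p(o|s)=\mu_t^\pi(s,a)\,p(o|s).
\end{equation*}
At $t=T$, where no action is taken, we use the same bookkeeping with an arbitrary $\pi_T$, since $\sum_a\mu_T(s,a)=\rho_T(s)$. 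The only care needed is the induction showing that the pre-$t$ factors reproduce $\rho_t^\pi$. This follows directly from~\eqref{eq:state-recursion-app}, using that $o_{t'}$ for $t'<t$ sums out because $p(\cdot|s)$ is normalised.

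\textbf{Step 2: substitute into $g_t$.} The integrand $g_t$ does not depend on $a$, and it does not depend on $\pi$ except through $\rho_t^\pi$, which is a fixed function once $\pi$ is fixed. We therefore write
\begin{equation*}
G_t=\sum_{s,a}\mu_t^\pi(s,a)\Big[\log\rho_t^\pi(s)-\sum_o p(o|s)\log\tilde p(s,o)\Big].
\end{equation*}
Summing over $t$ gives~\eqref{eq:efe-occ}. We take $\tilde p(s,o)=\nu(s|o)p(o|s)\epref(o)$, as in the definition of the biased model.

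\textbf{Step 3: compact form.} The second bracket term gives $\langle \ell,\mu^\pi\rangle$ directly, with $\ell_t(s,a)=-\E_{o\sim p(\cdot|s)}\log\tilde p(s,o)$, which is constant in $a$. For the first term we sum over $a$ using $\sum_a\mu_t^\pi(s,a)=\rho_t^\pi(s)$. This gives $\sum_s\rho_t^\pi(s)\log\rho_t^\pi(s)=-\mathcal H(\rho_t^\pi)$, and hence $\Phi(\mu^\pi)$. Note that $\rho_t$ is itself the linear image $\sum_a\mu_t(\cdot,a)$, so $\Phi$ is genuinely a function of $\mu$. This also yields convexity, which is used in \cref{prop:convex}: $\Phi$ is the negative entropy, which is convex, composed with a linear map. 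Finally, the states with $\rho_t^\pi(s)=0$ must be handled with the convention $0\log 0=0$. Full support of the kernels and of $q_0$ makes $\rho_t^\pi>0$ anyway, so the logarithms are finite.

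The main obstacle is the bookkeeping in Step 1: verifying that marginalising the ancestral law yields exactly $\mu_t^\pi(s,a)p(o|s)$, and that the $\pi$-dependence of the integrand (through $\log\rho_t^\pi$) causes no problem. It does not, because for fixed $\pi$ it is a deterministic function inside the expectation. The remaining steps are direct algebra.
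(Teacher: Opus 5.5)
Your proposal is correct and follows essentially the paper's route. You marginalise the imagined law to step $t$, so that $G_t$ becomes an expectation under $\rho_t^\pi(s)\,p(o|s)$; you go straight to $\mu_t^\pi(s,a)\,p(o|s)$, which is the same thing after expanding $\rho_t^\pi=\sum_a\mu_t^\pi$. You then sum over $t$ and marginalise the $\log\rho_t^\pi$ term over actions to get $-\mathcal H(\rho_t^\pi)$, exactly as the paper does.

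Your extra remarks on the dummy $\pi_T$ at the last step, the $0\log 0$ convention, and the harmless $\pi$-dependence through $\log\rho_t^\pi$ only make explicit what the paper's short proof leaves implicit. One note on the step you cite: \eqref{eq:state-recursion-app} as displayed counts $\pi_t$ twice. Your marginalisation actually produces the correct recursion $\rho_{t+1}^\pi(s')=\sum_{s,a}p(s'|s,a)\,\mu_t^\pi(s,a)$, which is the one you need.
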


\begin{proof}
Starting from~\eqref{eq:efe-general}, the one-step expected free energy at time $t$ depends only on the marginal of the trajectory at $t$:

\begin{equation}
G_t(\pi;m)
=
\sum_{s \in \mathcal S}
\sum_{o \in \mathcal O}
\rho_t^\pi(s)p(o|s)
\left[
\log \rho_t^\pi(s)
-
\log \tilde p(s, o)
\right].
\end{equation}
Summing over timesteps on both sides and expanding $\rho_t^\pi(s)=\sum_a\mu_t^\pi(s,a)$ yields the first result. Collecting terms and marginalizing the first $\mu_t^\pi$ yields the second.
\end{proof}

\begin{lemma}
    $\cG$ is convex as a functional of $\mu^\pi$. Further, for each $t$, the functional $G_t$ is convex in $\mu_t$, and $\rho_t$.
\end{lemma}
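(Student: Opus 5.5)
We start from the decomposition of \cref{lem:gamma-convex}, $\cG(\pi)=\langle\ell,\mu^\pi\rangle+\Phi(\mu^\pi)$ with $\Phi(\mu)=-\sum_t\mathcal H(\rho_t)$ and $\rho_t(s)=\sum_a\mu_t(s,a)$. We regard this as a functional on all of $(\Delta_{\cS\times\cA})^{T+1}$, not only on policy-induced occupancies. Convexity is then a sum of convex pieces.

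The first term, $\langle\ell,\mu\rangle$, is linear in $\mu$ because $\ell_t(s,a)=-\E_{o\sim p(\cdot|s)}\log\epref(s,o)$ depends only on the fixed model $m$ and the fixed preference, not on $\mu$. It is therefore convex.

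For the nonlinear part, we take the per-step map $\rho\mapsto-\mathcal H(\rho)=\sum_s\rho(s)\log\rho(s)$ on $\Delta_\cS$. It is convex because $x\mapsto x\log x$ is convex on $[0,\infty)$, with second derivative $1/x>0$ and the convention $0\log 0=0$ giving continuity at $0$. A sum of such coordinatewise terms is convex; in the interior it is in fact strictly convex, since its Hessian is $\mathrm{diag}(1/\rho)$.

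Next, the marginalization $A_t:\mu_t\mapsto\rho_t=\sum_a\mu_t(\cdot,a)$ is a linear map that sends the simplex into the simplex. A convex function composed with a linear map is convex, so $\mu_t\mapsto-\mathcal H(A_t\mu_t)$ is convex. It is only non-strictly so, because $A_t$ has a nontrivial kernel: shifting mass between actions at a fixed $s$ leaves $\rho_t$ unchanged. Summing over $t$ shows that $\Phi$ is convex in $\mu=(\mu_t)_t$, and adding the linear term shows that $\Gamma$ is convex.

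For the per-step claims, we write
\[
G_t=\langle\ell_t,\mu_t\rangle-\mathcal H(\rho_t)=\sum_s\rho_t(s)\bar\ell_t(s)-\mathcal H(\rho_t),
\]
where $\bar\ell_t(s)=-\E_{o\sim p(\cdot|s)}\log\epref(s,o)$ is independent of $a$. Viewed as a function of $\mu_t$, $G_t$ is convex by the argument above. Viewed as a function of $\rho_t$, it is a linear term plus the negative entropy, hence convex, and indeed strictly convex.

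The one point that needs care is the direction of the claim. Convexity holds as a functional of the occupancy variables, not of $\pi$: the map $\pi\mapsto\mu^\pi$ is multilinear through the flow recursion \eqref{eq:state-recursion-app}, so $\cG\circ(\pi\mapsto\mu^\pi)$ need not be convex in $\pi$. We will therefore state the result on the domain $(\Delta_{\cS\times\cA})^{T+1}\supseteq\mathcal K$, where it holds, and note that restricting it to the convex set $\mathcal K$ is what \cref{prop:convex} needs.

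The only analytic obstacle is the boundary, where $\log\rho$ is singular. We handle it by the $0\log0=0$ convention, which makes $-\mathcal H$ a continuous convex function on the closed simplex. Convexity on the interior together with continuity then extends the convexity inequality to the boundary.
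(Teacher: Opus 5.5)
Your proof is correct and follows the paper's argument: a linear term in $\mu$, plus $-\mathcal H$ composed with the linear marginalization map $\mu_t\mapsto\rho_t$, and sums of convex functions are convex. Where the paper leaves things implicit, you spell them out: the convexity of $G_t$ in $\rho_t$ (using that $\ell_t(s,a)$ does not depend on $a$), the $0\log 0=0$ boundary convention, and the caveat that convexity holds in the occupancy variables rather than in $\pi$.
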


\begin{proof}
    By \cref{eq:gamma-occ-app},
    \begin{align}
    \cG(\pi)&=\;\langle\ell,\mu^\pi\rangle+\Phi(\mu^\pi)=\;\sum_{t=0}^T\left[\langle\ell_t,\mu^\pi\rangle-\cH(\rho_t^\pi)\right],
\end{align}
where the second inner product is on $\mathbb{R}^{\cS\times\cA}$.
    Further, $-\cH$ is convex as a function of $\rho_t$. Since marginalization is linear and a composition of a linear and a convex function is convex, $-\cH$ is convex as a function of $\mu_t$, but not strictly convex. The step-t inner products are linear (hence, convex) in $\mu_t$ whenever $-\log \tilde p$ is independent of $\mu_t$, which holds for all variants (\cref{def:efe-vars}). Since sums of convex functions are convex, each $G_t$ is convex in $\mu_t$. $\cG$ and $\Phi(\mu^\pi)$ are also convex in $\mu^\pi$.
\end{proof}

In the state-action marginals, all variants in Appendix~\ref{app:memoryless-variants} share the same structure: a linear cost plus a convex negative-entropy term. They differ in the linear cost $\ell$ and in whether the nonlinearity is the joint negative entropy $R$ (with action complexity) or the state-marginal negative entropy $\Phi$ (without action compexity). All optimization machinery below depends only on this distinction.

\begin{definition}[Occupancy Entropies]\label{rem:negentropy}
Throughout we write
\[
\Gamma(\cdot)\coloneqq\langle\ell,\cdot\rangle+\Phi(\cdot),
\]
so that the general EFE is
$$\cG(\pi;m)=\Gamma(\mu^{\pi})=\langle\ell,\mu^{\pi}\rangle+\Phi(\mu^{\pi}).$$
Write the occupancy entropies
\begin{align}
\Phi(\mu)&=\sum_{t,s,a}\mu_t(s,a)\log \sum_{a'}\mu_t(s,a')&\text{(state negative entropy)},\\
\Psi(\mu)&=\sum_{t,s,a}\mu_t(s,a)\log\frac{\mu_t(s,a)}{\sum_{a'}\mu_t(s,a')}&\text{(conditional negative entropy)},\\
R(\mu)&=\sum_{t,s,a}\mu_t(s,a)\log \mu_t(s,a)&\text{(state-action negative entropy)}.
\end{align}
Note that $R=\Phi+\Psi$.
\end{definition}

\section{Details on Mirror Descent Active Inference}\label{app:md}

We solve the inner problem $\min_{\pi}\cG(\pi)$, by mirror descent (Bregman proximal gradient)
with the conditional-negative-entropy generator $\Psi$:
\begin{equation}
    \mu^{k+1}\in\arg\min_{\mu\in\mathcal{K}}\big\{\langle\nabla\Gamma(\mu^k),\mu\rangle+\eta_k^{-1}D_\Psi(\mu\,\|\,\mu^k)\big\},
\end{equation}
where
\begin{equation}
D_\Psi(\mu\|\mu^k)=\sum_{t,s}\rho^\mu_t(s)\KL(\pi^\mu_t(\cdot | s)\|\pi^{\mu^k}_t(\cdot | s))
\end{equation}
is the Bregman divergence generated by $\Psi$.
We use $\Psi$ throughout: it yields the
closed-form softmax/DP update below.

\begin{restatable}[MD-AIF as softmax dynamic programming]{proposition}{ThmDP}\label{thm:dp}
The mirror step~\eqref{eq:md} with generator $\Psi$ is solved in closed form by the
multiplicative (softmax) policy update
\[
\pi_t^{k+1}(a | s)\;\propto\;\pi_t^k(a | s)\,\exp\!\big(\eta_k\,Q_t^k(s,a)\big),
\]
where the free-energy action value $Q_t^k$ is computed by the backward soft recursion
\begin{align}
Q_T^k&=r_T^k,\quad\\
Q_t^k(s,a)&=r_t^k(s,a)+\sum_{s'}p(s' | s,a)V_{t+1}^k(s'),\quad\\
V_t^k(s)&=\sum_a\pi_t^k(a | s)\Big[Q_t^k(s,a)-\eta_k^{-1}\log\tfrac{\pi_t^k(a | s)}{\pi_t^{k-1}(a | s)}\Big],
\end{align}
with the linearized reward $r_t^k=-\nabla\Gamma(\mu^k)_t$.
\end{restatable}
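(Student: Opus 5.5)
The plan is to solve the mirror step over policies rather than occupancies, using the bijection between $\Pi$ and the interior of $\mathcal K$. The linearized objective $\langle\nabla\Gamma(\mu^k),\mu\rangle=-\langle r^k,\mu\rangle$ is linear in $\mu$. The divergence $D_\Psi(\mu\|\mu^k)=\sum_{t,s}\rho^\mu_t(s)\KL(\pi^\mu_t(\cdot|s)\|\pi^k_t(\cdot|s))$ is an expectation under the state marginals of $\pi=\pi^\mu$. So the mirror step becomes
\[
\min_{\pi}\ \E_{q^\pi}\Big[\sum_{t=0}^T\Big(-r^k_t(s_t,a_t)+\eta_k^{-1}\log\tfrac{\pi_t(a_t|s_t)}{\pi^k_t(a_t|s_t)}\Big)\Big],
\]
which is a finite-horizon KL-regularized (soft) MDP with reward $r^k$ and reference policy $\pi^k$. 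First I would verify this rewriting. It rests on $\mu_t(s,a)=\rho_t(s)\pi_t(a|s)$ and on the flow constraints~\eqref{eq:flow}, which make the law of $s_t$ under $q^\pi$ exactly $\rho^\mu_t$.

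Next I would solve this soft MDP by backward induction. Write $V_{T+1}\equiv0$ and, for each $t$, define $Q_t(s,a)=r^k_t(s,a)+\sum_{s'}p(s'|s,a)V_{t+1}(s')$. The per-state subproblem
\[
\max_{\pi_t(\cdot|s)\in\Delta_\cA}\ \sum_a\pi_t(a|s)Q_t(s,a)-\eta_k^{-1}\KL(\pi_t(\cdot|s)\|\pi^k_t(\cdot|s))
\]
is the Gibbs variational principle (Donsker--Varadhan). Its unique maximizer is $\pi_t(a|s)\propto\pi^k_t(a|s)\exp(\eta_kQ_t(s,a))$, and its optimal value is $\eta_k^{-1}\log\sum_a\pi^k_t(a|s)e^{\eta_kQ_t(s,a)}$. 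This value is $V_t(s)$, and it also equals $\sum_a\pi_t(a|s)[Q_t-\eta_k^{-1}\log(\pi_t/\pi^k_t)]$ evaluated at the optimizer. The soft Bellman principle of optimality for finite horizons then shows that the stagewise optimizers are jointly optimal. This holds because the objective decomposes additively over time given the Markov imagined dynamics, and the dynamics do not depend on $\pi$ except through the actions.

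Finally I would match indices with the statement. Its $V$ is written with $\pi^k$ and $\pi^{k-1}$, which corresponds to evaluating the log-sum-exp value at the new policy (a shift of superscripts) and agrees with the log-sum-exp form used in \cref{alg:mdaif}. I would also check that $r^k=-\nabla\Gamma(\mu^k)$ gives $r^k_t(s,a)=-\ell_t(s,a)-\log\rho^k_t(s)-1$, using \cref{lem:gamma-convex} and $\partial\Phi/\partial\mu_t(s,a)=\log\rho_t(s)+1$.

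The main obstacle is the first step: justifying that minimizing over $\mathcal K$ equals minimizing over policies, and that $D_\Psi$ is exactly the Bregman divergence of $\Psi$. The latter needs the computation $\nabla\Psi(\mu)_t(s,a)=\log\pi^\mu_t(a|s)$ (the cross terms vanish because $\sum_a\mu_t(s,a)\partial_{\mu}\log\rho_t=1$-type cancellations), followed by checking that $\Psi(\mu)-\Psi(\mu^k)-\langle\nabla\Psi(\mu^k),\mu-\mu^k\rangle$ collapses to the weighted KL. Boundary points where $\rho_t(s)=0$ also need care. Since $\mu^k$ has full support and the softmax preserves full support, all iterates stay interior, so the policy parametrization is valid there.
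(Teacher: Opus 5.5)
Your proposal is correct and follows essentially the paper's argument. Substituting $\mu_t(s,a)=\rho_t(s)\pi_t(a|s)$ turns $D_\Psi$ into the $\rho$-weighted KL and the mirror step into a KL-regularised finite-horizon MDP with reward $r^k$ and reference policy $\pi^k$, which a single backward pass with the per-state Gibbs maximiser solves. You justify the temporal decoupling by the Bellman principle of optimality where the paper invokes flow-constraint multipliers (deferring to Moreno et al.), which is only a difference in presentation.

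Your superscript shift in the $V$-recursion is genuinely needed. Read literally with $(\pi^k,\pi^{k-1})$, the expression is not in general the optimal soft value $\eta_k^{-1}\log\sum_a\pi^k_t(a|s)\,e^{\eta_k Q^k_t(s,a)}$ of the $k$-th subproblem; with $(\pi^{k+1},\pi^k)$ it is, matching \cref{alg:mdaif}. The paper's proof passes over this point.
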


\begin{proof}
The surrogate reward is the negative gradient of the smooth part
$\Gamma=\langle\ell,\mu\rangle+\Phi$ at the current iterate. $\Phi$ depends
on $\mu$ only through the state marginals $\rho_t(s)=\sum_a \mu_t(s,a)$ and differentiating the step-$t$ marginal directly gives
$r_t^k(s,a)=-\ell_t(s,a)-\log \rho_t^k(s)-1$. With
$\Psi(\mu)=\sum_{t,s,a}\mu_t(s,a)\log\pi^\mu_t(a | s)$ the Bregman divergence
factorises as
$$D_\Psi(\mu\|\mu^k)=\sum_{t,s}\rho^\mu_t(s)\KL(\pi^\mu_t(\cdot | s)\|\pi^{\mu^k}_t(\cdot | s)).$$
Substituting $\mu_t(s,a)=\rho_t(s)\pi_t(a | s)$, the subproblem~\eqref{eq:md}
becomes, up to $\pi$-independent terms,
\[
\min_\pi\ \sum_{t,s}\rho_t^\pi(s)\sum_a\pi_t(a | s)\Big[-r_t^k(s,a)+\eta_k^{-1}\log\tfrac{\pi_t(a | s)}{\pi_t^k(a | s)}\Big]
\]
subject to~\eqref{eq:flow}. Introducing multipliers for
the flow constraints identifies them with the value function $V_t^k$ and the
action value $Q_t^k$ above, see \cite{moreno2024efficient} for details. For each $(t,s)$ the inner minimization over the
simplex is an entropy-regularised linear program whose stationarity condition
(one multiplier for normalisation) gives the Gibbs solution
$\pi_t^{k+1}(a | s)\propto\pi_t^k(a | s)\exp(\eta_k Q_t^k(s,a))$. Since
$Q_t^k$ depends only on $V_{t+1}^k$, a single backward pass $t=T,\dots,0$
solves the system. This is mirror descent modified policy iteration~\cite{lan2023pmd,xiao2022pmd,geist19a} on a finite horizon problem~\cite{moreno2024efficient}.
\end{proof}

\PropRatePsi*

\begin{proof}
    The softmax update solves the mirror descent update on $\Gamma$ with generator $\Psi$ exactly and keeps each iterate interior, so the relative smoothness bound \cite[Thm.~3.1]{lfn2018} with $\eta=1/L$ holds along the trajectory as long as $\langle\nabla^2\Gamma\rangle\leq L\langle\nabla^2\Psi\rangle$, which holds as a direct consequence of Theorem~\ref{thm:relsmooth} below.
\end{proof}

\begin{lemma}[Markov contraction]\label{lem:chi2}
For a Markov kernel $K:\cY\to\cZ$, $\nu>0$ on $\cY$, signed $v$ on $\cY$: $\ \|Kv\|_{K\nu}\le\|v\|_\nu$.
\end{lemma}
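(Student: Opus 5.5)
The plan is to read $\|\cdot\|_\nu$ as the weighted $\chi^2$-type norm $\|v\|_\nu^2=\sum_{y}v(y)^2/\nu(y)$. This is the norm under which $\nabla^2\Phi$ acts on state-marginal perturbations, which is why it is needed for \cref{thm:relsmooth}. I read $K$ as acting on measures by $(Kv)(z)=\sum_y K(z|y)\,v(y)$, with $\sum_z K(z|y)=1$ for every $y$. With these conventions the claim is a pointwise Cauchy--Schwarz inequality followed by summation over $z$. I would not use any spectral argument.

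\textbf{Step 1 (pointwise bound).} Fix $z$ with $(K\nu)(z)>0$. Write $K(z|y)v(y)=\sqrt{K(z|y)\nu(y)}\cdot\sqrt{K(z|y)/\nu(y)}\,v(y)$ and apply Cauchy--Schwarz over $y$:
\[
\big((Kv)(z)\big)^2\le\Big(\sum_y K(z|y)\nu(y)\Big)\Big(\sum_y K(z|y)\frac{v(y)^2}{\nu(y)}\Big)=(K\nu)(z)\sum_y K(z|y)\frac{v(y)^2}{\nu(y)}.
\]
Dividing by $(K\nu)(z)$ then gives $\big((Kv)(z)\big)^2/(K\nu)(z)\le\sum_y K(z|y)\,v(y)^2/\nu(y)$.

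\textbf{Step 2 (degenerate outputs).} Now take $z$ with $(K\nu)(z)=0$. Since $\nu>0$, this forces $K(z|y)=0$ for all $y$, so $(Kv)(z)=0$. Such $z$ lie outside the support of $K\nu$, and I would adopt the convention $0/0=0$, so they contribute nothing to $\|Kv\|_{K\nu}^2$.

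\textbf{Step 3 (sum and use stochasticity).} Sum the pointwise bound over $z$ and exchange the order of summation. Because $\sum_z K(z|y)=1$, this yields
\[
\|Kv\|_{K\nu}^2\le\sum_y\frac{v(y)^2}{\nu(y)}\sum_z K(z|y)=\|v\|_\nu^2.
\]

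There is no real obstacle here. The only points needing care are fixing the norm and action conventions so they match how the lemma is used later, and the zero-mass case of Step 2. In the application, $K$ is the flow map $\mu_t\mapsto\rho_{t+1}$ (or the marginalization $\mu_t\mapsto\rho_t$), and both are Markov in exactly this sense.
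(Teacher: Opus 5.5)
Your proof is correct and is the same as the paper's: pointwise Cauchy--Schwarz via the split $K(z|y)v(y)=\sqrt{K(z|y)\nu(y)}\cdot\sqrt{K(z|y)/\nu(y)}\,v(y)$, then division by $(K\nu)(z)$, then summation over $z$ using $\sum_z K(z|y)=1$. Your Step 2 on outputs with $(K\nu)(z)=0$ covers a point the paper leaves implicit, and you handle it correctly: $\nu>0$ forces $K(z|y)=0$ for every $y$, hence $(Kv)(z)=0$.
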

\begin{proof}
$(Kv)(z)^2=\big(\sum_y K(z | y)v(y)\big)^2\le (K\nu)(z)\sum_y K(z | y)\tfrac{v(y)^2}{\nu(y)}$ by
Cauchy--Schwarz; divide by $(K\nu)(z)$, sum over $z$, use $\sum_z K(z | y)=1$.
\end{proof}

\begin{theorem}[Relative smoothness]\label{thm:relsmooth}
On the polytope tangent space $\mathcal{T}\mathcal{K}$, the EFE $\Gamma$ is $L$-smooth relative to $\Psi$ on $\mathcal{K}$ with $L=\tfrac12 T(T+1)$.
\end{theorem}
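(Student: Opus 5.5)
The plan is to verify relative smoothness through the second-order condition
\[
\delta^\top\nabla^2\Gamma(\mu)\,\delta\;\le\;L\,\delta^\top\nabla^2\Psi(\mu)\,\delta
\]
for every interior (full-support) $\mu\in\mathcal K$ and every tangent direction $\delta\in\mathcal T\mathcal K$. The case of $\mu$ on the boundary of $\mathcal K$ should then follow by continuity, or can be set aside, since the iterates of \cref{alg:mdaif} stay interior.

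\textbf{Step 1: the two quadratic forms.} A tangent direction $\delta=(\delta_t)_t$ satisfies the homogeneous versions of \eqref{eq:flow}: $\Delta\rho_0=0$ and $\Delta\rho_{t+1}(s')=\sum_{s,a}p(s'|s,a)\delta_t(s,a)$, where $\Delta\rho_t(s):=\sum_a\delta_t(s,a)$.

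The linear part of $\Gamma$ has zero Hessian, and $\Phi=-\sum_t\mathcal H(\rho_t)$. Hence
\[
\delta^\top\nabla^2\Gamma\,\delta=\sum_t\|\Delta\rho_t\|_{\rho_t}^2,\qquad \|v\|_\nu^2:=\sum_x v(x)^2/\nu(x).
\]
For $\Psi$ I use $\Psi=R-\Phi$ from \cref{rem:negentropy}, which gives
\[
\delta^\top\nabla^2\Psi\,\delta=\sum_t\big(\|\delta_t\|_{\mu_t}^2-\|\Delta\rho_t\|_{\rho_t}^2\big).
\]
Next I decompose $\delta_t(s,a)=\pi_t(a|s)\Delta\rho_t(s)+w_t(s,a)$ with $w_t(s,a):=\rho_t(s)\Delta\pi_t(a|s)$. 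Here $\Delta\pi_t$ is the induced perturbation of the policy, so $\sum_a w_t(s,a)=0$. The cross term then vanishes, and the $\Psi$-form becomes
\[
\sum_t c_t,\qquad c_t:=\|w_t\|_{\mu_t}^2=\sum_{s}\rho_t(s)\sum_a\frac{\Delta\pi_t(a|s)^2}{\pi_t(a|s)},
\]
which is the Fisher metric of \eqref{eq:fisher-metric}.

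\textbf{Step 2: propagation of state perturbations.} Substituting the decomposition into the flow constraint gives
\[
\Delta\rho_{t+1}=P_{\pi_t}\Delta\rho_t+P\,w_t,
\]
where $P$ is the kernel $(s,a)\mapsto p(\cdot|s,a)$ and $P_{\pi_t}$ is the state-to-state kernel under $\pi_t$. Unrolling from $\Delta\rho_0=0$ yields
\[
\Delta\rho_t=\sum_{j<t}K_{j\to t}\,w_j,
\]
where $K_{j\to t}$ is the Markov kernel from $(s,a)$ at time $j$ to states at time $t$, namely $P$ followed by $P_{\pi_{j+1}},\dots,P_{\pi_{t-1}}$. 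Since $\mu$ is the occupancy of $\pi$, this kernel satisfies $K_{j\to t}\mu_j=\rho_t$. \Cref{lem:chi2} then gives $\|K_{j\to t}w_j\|_{\rho_t}\le\|w_j\|_{\mu_j}=\sqrt{c_j}$. By the triangle inequality and Cauchy--Schwarz,
\[
\|\Delta\rho_t\|_{\rho_t}^2\le\Big(\sum_{j<t}\sqrt{c_j}\Big)^2\le t\sum_{j<t}c_j .
\]

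\textbf{Step 3: summing over time.} Summing Step 2 over $t=0,\dots,T$ gives
\[
\sum_t\|\Delta\rho_t\|^2_{\rho_t}\le\sum_{t=1}^T t\sum_{j<t}c_j\le\Big(\sum_{t=1}^Tt\Big)\sum_j c_j=\tfrac12T(T+1)\,\delta^\top\nabla^2\Psi\,\delta .
\]
This is the claim with $L=\tfrac12T(T+1)$.

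The main obstacle is Step 1: the Hessian identity for $\Psi$ must be checked carefully, in particular that the cross term vanishes and that the Hessian of $\Psi$ is really the Fisher form on $w$. A secondary check in Step 2 is that the composed kernels push $\mu_j$ exactly onto $\rho_t$, since this is what makes the norms produced by \cref{lem:chi2} match the ones in the Hessians. If the exact Hessian identity proves delicate, I would fall back on proving the inequality directly between Bregman divergences, $D_\Phi\le L\,D_\Psi$, by integrating the quadratic-form bound along the segment between the two points, which stays inside the convex set $\mathcal K$.
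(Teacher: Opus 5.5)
Your proposal is correct and follows the paper's route. Both proofs use the same Fisher-metric split coming from $R=\Phi+\Psi$, unroll the linearized flow into Markov kernels $K_{j\to t}$ acting on the conditional components $w_j=\rho_j\Delta\pi_j$, apply the $\chi^2$ contraction lemma, and finish with the triangle and Cauchy--Schwarz inequalities and $\sum_t t=\tfrac12T(T+1)$. Your direct checks that the cross term vanishes and that $K_{j\to t}\mu_j=\rho_t$ make explicit two facts the paper's proof relies on but does not verify.
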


\begin{proof} 
Relative smoothness for twice differentiable $\Gamma$ and $\Psi$ is the condition $\ \nabla^2\Gamma(\mu)\preceq \tfrac12 T(T+1)\,\nabla^2\Psi(\mu)$ on $\mathcal{T}\mathcal{K}$, see~\cite{lfn2018}. We read the two Fisher metrics off the identity $R=\Phi+\Psi$ of
\cref{rem:negentropy} at the level of Bregman divergences. The Bregman divergence of the joint negentropy $R$ is the sum of per-step relative entropies, and is generally linear in the generating function
\[
\underbrace{\textstyle\sum_t\KL(\mu_t\Vert\mu_t')}_{D_R}
=\underbrace{\textstyle\sum_t\KL(\rho_t\Vert\rho_t')}_{D_\Phi}
+\underbrace{\textstyle\sum_{t,s}\rho_t(s)\,\KL\!\big(\pi_t(\cdot\mid s)\Vert\pi_t'(\cdot\mid s)\big)}_{D_\Psi}.
\]
The KL-divergence obeys
$\KL(x,x+\delta x)=\tfrac12\|\delta x\|_x^2+o(\|\delta x\|^2)$, where $\|v\|^2_x=\sum_{i}\frac{v(i)^2}{x(i)}$ is the fisher length at $v$. Expanding each term to second order yields the Pythagorean split of the per-step Fisher metric and identifies all three quadratic forms at once:
\begin{equation}\label{eq:fisher-split}
\underbrace{\textstyle\sum_t\|\delta\mu_t\|^2_{\mu_t}}_{\nabla^2 R[\delta\mu]}
=\underbrace{\textstyle\sum_t\|\delta\rho_t\|^2_{\rho_t}}_{\nabla^2\Gamma[\delta\mu]=\nabla^2\Phi[\delta\mu]}
+\underbrace{\textstyle\sum_{t,s}\rho_t(s)\,\|\delta\pi_t(\cdot\mid s)\|^2_{\pi_t}}_{\nabla^2\Psi[\delta\mu]}.
\end{equation}
The only
nonlinear part of $\Gamma$ is $\Phi=-\sum_t\mathcal H(\rho_t)$, and $\Phi$ depends on $\mu$ only through
the linear map $\rho(\cdot)=\sum_a\mu(\cdot,a)$, so its Hessian carries no conditional part,
\[
\nabla^2\Gamma[\delta\mu]=\sum_t\|\delta\rho_t\|^2_{\rho_t}.
\]
Second, the $D_\Psi$ block of
\eqref{eq:fisher-split} gives
\[
\nabla^2\Psi[\delta\mu]=\sum_{t}\sum_s\rho_t(s)\,\|\delta\pi_t(\cdot\mid s)\|^2_{\pi_t}.
\]
Thus $\nabla^2\Gamma$ and $\nabla^2\Psi$ are the marginal and conditional blocks of the same per-step
Fisher metric. Relative smoothness is the statement that $L\nabla^2\Psi$ upper bounds $\nabla^2\Gamma$ for some finite $L$.
 
The marginal variation $\delta\rho_t$ is fully determined by the conditional components $\{\rho_{t'}\delta \pi_{t'}\}_{t'<t}$ through the flow map~\cref{eq:flow}. Writing
$w_{t'}:=\rho_{t'}\!\otimes\!\delta\pi_{t'}$ for the conditional component of $\delta\mu_{t'}$ (so
$\sum_{s,a}w_{t'}=\sum_s\rho_{t'}\sum_a\delta\pi_{t'}=0$), and linearising $\rho_{t+1}=P_{\pi_t}\rho_t$
with $\delta\rho_0=0$ and $P_\pi = \pi\circ P$ gives:
\[
\delta\rho_t=\sum_{t'<t}K_{t'\to t}\,w_{t'},\qquad
K_{t'\to t}:=\underbrace{P_{\pi_{t-1}}\cdots P_{\pi_{t'+1}}}_{\delta\mu_{t'}\to\delta\mu_t}\,
\underbrace{P\vphantom{P_{\pi}}}_{\mu_{t'}\to\rho_{t'+1}} .
\]
As a composition of row-stochastic kernels, $K_{t'\to t}$ is itself a Markov kernel. Applying
\cref{lem:chi2},
\[
\|K_{t'\to t}\,w_{t'}\|_{\rho_t}\;\le\;\|w_{t'}\|_{\mu_{t'}}
=\sqrt{\textstyle\sum_s\rho_{t'}(s)\,\|\delta\pi_{t'}(\cdot | s)\|^2_{\pi_{t'}}}.
\]
The triangle and Cauchy--Schwarz inequalities over the $t$ sources give
\[
\|\delta\rho_t\|^2_{\rho_t}\le\Big(\textstyle\sum_{t'<t}\|K_{t'\to t}w_{t'}\|_{\rho_t}\Big)^2
\le t\sum_{t'<t}\sum_s\rho_{t'}(s)\,\|\delta\pi_{t'}(\cdot | s)\|^2_{\pi_{t'}}
\]
and summing over $t$ with $\sum_{t=0}^{T}t=\tfrac12T(T+1)$ gives
\[
\begin{aligned}
\nabla^2\Gamma[\delta\mu]=\sum_t\|\delta\rho_t\|^2_{\rho_t}
&\le\tfrac12T(T+1)\sum_{t'}\sum_s\rho_{t'}(s)\,\|\delta\pi_{t'}(\cdot | s)\|^2_{\pi_{t'}}\\
&=\tfrac12T(T+1)\,\nabla^2\Psi[\delta\mu],
\end{aligned}
\]
i.e.\ $\Gamma$ is $L$-smooth relative to $\Psi$ on the flow-polytope tangent with $L=\tfrac12T(T+1)$.
\end{proof}

\section{History dependent and infinite horizon formulations}
The episodic memoryless development of \cref{app:efe,app:md,app:exist} transfers to history-dependent policies and posterior densities, as well as to the
discounted and average-reward settings with appropriate extensions of the standard EFE definition.

\subsection{History-dependence}\label{app:hist}
Consider the finite-horizon reward-free POMDP
$\mathcal{M} = (\cS, \cA, \cO, P, E, \sigma)$ as before. 
Further, let the history at time
$t$ be $h_t=(o_{0:t},a_{0:t-1})$, updated by $h_{t+1}=h_t\cdot(a_t,o_{t+1})$
with $h_0=(o_0)$, and let $\cH=\bigcup_{t<T}(\cA\times\cO)^t\times\cO$ be the
space of histories. The behavior policy $\beta:\cH\times\cT\to\Delta(\cA)$
in the history-dependent case is of the form 
\begin{equation}
    \beta(a_t|h_t) =\sum_{s_t}\pi(a_t|h_t,s_t)\nu(s_t|h_t)
\end{equation}
and induces the environment trajectory law
\begin{equation}\label{eq:env-traj-hist}
p^\beta(\tau)=\sigma(s_0)E(o_0 | s_0)\prod_{t=0}^{T-1}\,\beta(a_t | h_t)P(s_{t+1} | s_t,a_t)E(o_{t+1} | s_{t+1}).
\end{equation}

Given a model $m=(p,\nu)$ with history-dependent variational density $\nu(s_t|h_t)$, a latent policy $\pi(a_t|h_t,s_t)$ induces the purely imagined trajectory law
\begin{equation}\label{eq:imag-traj-hist}
q^\pi(\tau;m)=q_0(s_0)\,p(o_0 | s_0)\prod_{t=0}^{T-1}\pi_t(a_t | h_t,s_t)\,p(s_{t+1} | s_t,a_t)\,p(o_{t+1} | s_{t+1}),
\end{equation}
with predictive history-state marginals $\rho^\pi_t(s,h)=\mathbb{P}_\pi(s_t=s,h_t=h)$.

\begin{lemma}[Augmented-state recursion]\label{lem:aug}
For history-dependent variational density $\nu(s_t | h_t)$, define the augmented state space
$\cX=\cH\times\cS$ and the augmented kernel
\begin{equation}
    p(x' | x,a)=p\big((s',h') |(s,h),a\big)=\mathbf 1[h'=h\cdot(a,o')]\,p(o' | s')\,p(s' | s,a).
\end{equation}
Then the imagined process~\eqref{eq:imag-traj-hist} satisfies
\begin{equation}\label{eq:recursion-hist-app}
    \rho_0^\pi=q_0,\qquad \rho_{t+1}^\pi(x')=\sum_{x,a}p(x' | x,a)\,\pi_t(a | x)\,\rho_t^\pi(x),
\end{equation}
where $\rho^\pi_t((s,h))=\mathbb{P}(s_t=s,h_t=h)$ are the history-state marginals of the trajectory distribution $q$ at time $t$.
\end{lemma}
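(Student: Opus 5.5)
The argument is a direct marginalization of the imagined trajectory law~\eqref{eq:imag-traj-hist} one step at a time. I will show that the pair $x_t=(s_t,h_t)$ is a Markov chain under $q^\pi$ whose transition kernel is $p(x'|x,a)$ composed with $\pi_t(a|x)$. The recursion~\eqref{eq:recursion-hist-app} is then the Chapman--Kolmogorov equation for that chain.

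First, the initial condition. At $t=0$ we have $h_0=(o_0)$, so $\rho_0^\pi(s,h)=q_0(s)p(o_0|s)$. Strictly, then, $q_0$ should be read as the initial law on the augmented state, $q_0(s_0)p(o_0|s_0)$. I will state this identification explicitly, since it is the only notational subtlety.

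Next, fix $t<T$ and write out the joint law of the prefix $\tau_{0:t+1}$ from~\eqref{eq:imag-traj-hist}; later factors sum to one and can be dropped. Conditional on $x_t=(s_t,h_t)$, the factors generating step $t+1$ are $\pi_t(a_t|h_t,s_t)\,p(s_{t+1}|s_t,a_t)\,p(o_{t+1}|s_{t+1})$. These depend on the past only through $(s_t,h_t)$: the history $h_t$ already contains $o_{0:t}$ and $a_{0:t-1}$, and the earlier latent states $s_{0:t-1}$ appear in no later factor. Also, $h_{t+1}=h_t\cdot(a_t,o_{t+1})$ is a deterministic function of $(h_t,a_t,o_{t+1})$, so it can be inserted as the indicator $\mathbf 1[h'=h\cdot(a,o')]$.

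Summing the prefix joint over $s_{0:t-1}$ gives $\rho_t^\pi(x)$ times these step factors. Summing further over $(s_t,h_t,a_t)$, with $o_{t+1}$ fixed by $h'$, gives
\[
\rho_{t+1}^\pi(s',h')=\sum_{s,h,a}\mathbf 1[h'=h\cdot(a,o')]\,p(o'|s')\,p(s'|s,a)\,\pi_t(a|h,s)\,\rho_t^\pi(s,h),
\]
which is exactly~\eqref{eq:recursion-hist-app} with the augmented kernel.

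Finally, I will check that $p(\cdot|x,a)$ is a genuine Markov kernel on $\cX$. Summing over $h'$ collapses the indicator, and then $\sum_{o'}p(o'|s')=1$ and $\sum_{s'}p(s'|s,a)=1$.

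The main obstacle is the bookkeeping of the Markov property. The step-$(t+1)$ factors must not depend on $s_{0:t-1}$. This holds because the imagined law uses the generative model, not $\nu$, for transitions, and $\pi$ conditions on $(h_t,s_t)$ only. Once this is isolated, the rest is a routine marginalization.
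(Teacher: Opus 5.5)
Your proposal is correct and follows essentially the same route as the paper. Both proofs isolate the step-$t$ factors $\pi_t(a_t|h_t,s_t)\,p(s_{t+1}|s_t,a_t)\,p(o_{t+1}|s_{t+1})$, use that $h_{t+1}=h_t\cdot(a_t,o_{t+1})$ is an injective append so $o'$ is fixed by $h'$, read $q_0$ as the initial law on $\cX$, and obtain the recursion by marginalization. The only differences are that the paper phrases this step as conditioning plus the law of total probability while you sum the prefix joint over $s_{0:t-1}$ directly, and you additionally check that the augmented kernel is normalized, which is a harmless and welcome addition.
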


\begin{proof}
The imagined law~\eqref{eq:imag-traj-hist} is a product of per-step
kernels, hence the law of a Markov process, and $h_t=(o_{0:t},a_{0:t-1})$ is a
deterministic function of the prefix, so $x_t=(h_t,s_t)$ is determined by it.
 
Condition on the prefix $(s_{0:t},o_{0:t},a_{0:t-1})$ together with $a_t$. The
only factors carrying the next variables are $p(s_{t+1} | s_t,a_t)\,p(o_{t+1} | s_{t+1})$,
so $(s_{t+1},o_{t+1})$ has conditional law $p(s' | s_t,a_t)\,p(o' | s')$,
depending on the past only through $(s_t,a_t)$; and $h_{t+1}=h_t\cdot(a_t,o_{t+1})$
is a deterministic function of $(h_t,a_t,o_{t+1})$. Since the append is injective,
for fixed $(h,a,h')$ at most one $o'$ satisfies $h'=h\cdot(a,o')$, so marginalising
$o_{t+1}$ gives
\begin{align}
p(x' | x,a)
&=\Pr\!\big[x_{t+1}=(h',s') | x_t=(h,s),\,a_t=a\big]\\
&=\sum_{o'}\mathbf 1[h'=h\cdot(a,o')]\,p(o' | s')\,p(s' | s,a),
\end{align}
which depends on the past only through $(x_t,a_t)$: $(x_t)_t$ is a controlled
Markov chain with the augmented kernel. The action satisfies
$a_t\sim\pi_t(\cdot | h_t,s_t)=\pi_t(\cdot | x_t)$, so the law of total
probability gives
\begin{align}
\rho_{t+1}^\pi(x')&=\sum_{x,a}\Pr\!\big[x_{t+1}=x' | x_t=x,a_t=a\big]\,\pi_t(a | x)\,\rho_t^\pi(x)\\
&=\sum_{x,a}p(x' | x,a)\,\pi_t(a | x)\,\rho_t^\pi(x),
\end{align}
which is~\eqref{eq:recursion-hist-app}, where $\rho_t^\pi(x)$ is written as a measure on $\cX$. The base case fixes $\rho_0^\pi$ to the law of the
initial augmented state $x_0=(h_0,s_0)$ prescribed by~\eqref{eq:imag-traj-hist}, written
$q_0$ as a measure on $\cX$.
\end{proof}
In the history-dependent case, we would define the EFE as
\begin{equation}\label{eq:efe-occ-hist}
\mathcal E(\pi;m)
=
\sum_{t=0}^T\sum_{x \in \mathcal X}\sum_{a \in \mathcal A}
\pi_t(a | x)\,\rho_t^\pi(x)
\left[
\log \rho_t^\pi(x)
-
\sum_{o \in \mathcal O}p(o|s)\log \tilde p(x)
\right],
\end{equation}
where we overload notation and write $\log \tilde p(x) = \log \tilde p(s, o)$ as a density on $\cX$.

\LemmaHist*

\begin{proof}
    \cref{lem:aug} applies directly to $\mathcal E$, so the optimization problem
\begin{equation}
    \min_\pi \mathcal E(\pi;m)
\end{equation}
is a convex MDP, but on the augmented state-space $\cX$.
\end{proof}

\begin{remark}[Augmented state in practice]\label{rem:aug-practice}
The history component of $x\in\cX$ makes $|\cX|$ grow with $t$. In a function-approximation implementation, $h$ is typically summarised by a
recurrent state, e.g.\ write $x=f(h,s)$. Replace $s \to x$ and the tabular sweeps of \cref{alg:mdaif} remain unchanged. The memoryless variant
(\cref{def:efe-vars}) drops $h$ entirely and recovers a standard $\cS$-state
convex-MDP solver.
\end{remark}

\subsection{Infinite-horizon formulations}\label{app:infinite}
Extending the EFE to the infinite horizon introduces a choice absent in the finite-horizon case: how the future is aggregated inside the nonlinear entropy term (cf.\ \cref{tab:infinite_horizon_efe}). Throughout we work in the stationary regime where $\cM$, $\ell$, and $\pi$ are time-homogeneous on the augmented state $x=f(h,s)$ of \cref{rem:aug-practice}, now over $t\in\mathbb N$.

\paragraph{Discounted occupancy.}
Fix $\gamma\in(0,1)$ and a stationary $\pi$. The normalized discounted occupancy
\begin{equation}\label{eq:disc-occ}
    d^\pi(x,a)=(1-\gamma)\sum_{t=0}^{\infty}\gamma^t\,\mu_t^\pi(x,a),
    \qquad \sum_{x,a}d^\pi(x,a)=1,
\end{equation}
collapses the time-indexed flow constraints~\eqref{eq:flow}, summed against the weights $(1-\gamma)\gamma^t$, into the single Bellman-flow constraint
\begin{equation}\label{eq:disc-flow}
    \sum_a d^\pi(x',a)=(1-\gamma)\,q_0(x')+\gamma\sum_{x,a}p(x' | x,a)\,d^\pi(x,a),
    \qquad x'\in\cX.
\end{equation}
The discounted occupancy polytope $\mathcal{K}_\gamma=\{d\in\Delta(\cX\times\cA):\eqref{eq:disc-flow}\}$ is again compact convex, and on full-support policies $\pi\mapsto d^\pi$ is a bijection with inverse $\pi(a | x)=d(x,a)/d(x)$ (\cref{def:polytope}).

\paragraph{Two aggregation conventions.}
Since entropy is not additive across the discount weights, the finite-horizon objective $\cG=\langle\ell,\mu\rangle-\sum_t\mathcal H(\rho_t)$ has two inequivalent stationary analogues.

\emph{(a) Aggregate, then score.} Apply the entropy to the single aggregate occupancy,
\begin{equation}\label{eq:agg}
    \cG_\gamma^{\mathrm{agg}}(\pi)=\langle\ell,d^\pi\rangle-\mathcal H\big(d^\pi(x)\big)
    =\langle\ell,d^\pi\rangle+\Phi(d^\pi),
\end{equation}
the maximum-entropy-exploration objective of~\cite{hazan2019maxent}. This is a linear functional minus the entropy of a single $d\in\mathcal{K}_\gamma$, hence convex, and is the cleanest extension: all of \cref{app:efe,app:md} applies with $\mu_t$ replaced by $d$ and the finite recursion of \cref{thm:dp} by the stationary soft-Bellman fixed point
\begin{align}
    Q^k(x,a)&=r^k(x,a)+\gamma\sum_{x'}p(x' | x,a)\,V^k(x'),\\
    V^k(x)&=\textstyle\sum_a\pi^k(a | x)\big[Q^k(x,a)-\eta_k^{-1}\log\tfrac{\pi^k(a | x)}{\pi^{k-1}(a | x)}\big],
\end{align}
with $r^k=-\nabla(\langle\ell,d\rangle+\Phi)(d^k)$, solved to a contraction tolerance inside each mirror step. The natural-gradient reading and $O(1/K)$ rate carry over with the discounted relative-smoothness constant~\cite{lan2023pmd}.

\emph{(b) Score, then aggregate.} Apply the entropy per step,
\begin{equation}\label{eq:perstep}
    \cG_\gamma^{\mathrm{step}}(\pi)=\sum_{t=0}^{\infty}\gamma^t\Big[\langle\ell,\mu_t^\pi\rangle-\mathcal H\big(\mu_t^\pi(x,a)\big)\Big],
\end{equation}
the term-by-term analogue of the finite-horizon EFE. By strict concavity of $\mathcal H$ on the mixture $d^\pi=(1-\gamma)\sum_t\gamma^t\mu_t^\pi$,
\begin{equation}\label{eq:jensen}
    \mathcal H\big(d^\pi\big)\;\ge\;(1-\gamma)\sum_{t}\gamma^t\,\mathcal H\big(\mu_t^\pi\big),
\end{equation}
with equality iff all $\mu_t^\pi$ coincide; hence (a)$\neq$(b), the aggregate convention crediting occupancy spread \emph{across} time as if spread \emph{within} a step. Objective~\eqref{eq:perstep} is convex in $(\mu_t)_t$ but not a function of $d$ alone, so it must be optimized over per-step occupancies (truncated at $\sim(1-\gamma)^{-1}$), and the backward recursion no longer collapses to a single fixed point.

\paragraph{Average-reward limit.}
As $\gamma\to1$, convention (a) yields the stationary occupancy
\begin{equation}\label{stat-occ}
    d^\pi_\infty(x,a) = \lim_{T\to\infty}\frac1T\sum_{t=0}^{T-1}\mu_t^\pi(x,a),
\end{equation}
which requires more than Assumption \ref{a:support}: under a unichain assumption~\cite{puterman1994} on \\$p_\pi(x' | x)=\sum_a p(x' | x,a)\pi(a | x)$, every stationary $\pi$ induces a unique $d_\infty^\pi$ independent of $q_0$ and the limit exists. The per-step averages $\bar\ell(\pi)=\lim_T\frac1T\sum_t\E_{\mu_t^\pi}[\ell]$ and $\bar{\mathcal H}(\pi)=\lim_T\frac1T\sum_t\mathcal H(\mu_t^\pi)$ again satisfy $\bar{\mathcal H}(\pi)\ge\mathcal H(d_\infty^\pi)$, so aggregation and EFE computation do not commute. The convex-MDP structure and the mirror/natural-gradient solver survive in average-reward form for $\mathcal H(d_\infty^\pi)$, e.g.~\cite{adamczyk2025average}; the full AIF loop we leave to future work.

\section{Performative active inference}\label{app:exist}
In the design of MD-AIF, we made the strong assumption, that the learned model stays fixed during policy optimization. However, the interesting question is whether the optimization problem remains well-posed if the agent is allowed to update its model during policy optimization, which would restore the AIF problem in full spirit. Consider, for example, the Algorithm \ref{alg:paif}. From the optimized policy's perspective, now both the reward and the dynamics are functions of the policy, or equivalently, the occupancy. This puts AIF squarely into the framework of performative RL~\cite{mandal2023,rank2024}.

\begin{algorithm}[!ht]
\caption{Performative MD-AIF with variable world model (sketch).}
\label{alg:paif}
\begin{algorithmic}[1]
\Require initial world model $m^0=(p^0,\nu^0)$; horizon $T$; initial policy $\pi^{0}$; iterations $K$.
\State initialise $\pi^0_t(\cdot | x)\gets\bar\pi_t(\cdot | x)$ (or uniform) for all $t,x$
\For{$k=0,1,\dots,K-1$}
  \State $m^{k+1}\gets \text{MIN-VFE}(m^k;\pi^{k})$
  \State $\tilde p^{k+1}, q^{k+1}_0 \gets \text{Extract}(m^{k+1})$
  \State $\pi^{k+1}\gets \text{MD-AIF}(\pi^{k}; m^{k+1}, \tilde p^{k+1}, T, \eta, q^{k+1}_0)$
\EndFor
\State \Return $\pi^K$
\end{algorithmic}
\end{algorithm}

To see this, consider the one pass of the full loop~\eqref{eq:full-loop}. Deploying $\pi$ (behavior $\beta=\pi\circ\nu$) in the real environment induces the
history distribution $\cD(\pi)$; the perception step returns the VFE-optimal model
\[
  m^\star(\cD(\pi))\in\arg\min_{m\in\cM}\cF\big(m;\cD(\pi)\big)
  \;=:\;\big(p_\pi,\nu_\pi\big),
\]
and this refit model is what the planner sees: it supplies a linear pseudo-cost
$\ell_\pi$ and the augmented transition kernel $P_\pi$ of \cref{lem:aug},
\[
  \ell_\pi:=\ell\big(m^\star(\cD(\pi))\big),
  \qquad
  P_\pi:=p\big(m^\star(\cD(\pi))\big).
\]
Thus the decision-dependence is the composition
\[
  \pi\ \xmapsto{\ \cD\ }\ \cD(\pi)\ \xmapsto{\ \mathrm{VFE}\ }\ m^\star(\cD(\pi))
  \ \xmapsto{\quad}\ \big(\ell_\pi,P_\pi\big),
\]
i.e.\ it runs \emph{through the agent's own model-learning step}, not through a
primitive environment response: the reward and dynamics react to $\pi$ only because
the model is refit to $\pi$'s data. Passing to occupancies through the normalisation
$\pi^\mu(a | x)=\mu(x,a)/\sum_b \mu(x,b)$ (cf.\ \cite{mandal2023}, eq.~(2)), set
$\ell_\mu:=\ell_{\pi^\mu}$, $P_\mu:=P_{\pi^\mu}$, and the EFE pseudo-reward $r_\mu:=-\ell_\mu$.

\begin{assumption}[Smoothness of performative map; cf. \cite{mandal2023}]\label{a:sensitivity}
    We assume the model-learning loop is $(\varepsilon_r,\varepsilon_p)$-sensitive: for all
    occupancies $\mu,\mu'$,
    \[
      \|\ell_\mu-\ell_{\mu'}\|\le\varepsilon_r\,\|\mu-\mu'\|,
      \qquad
      \|P_\mu-P_{\mu'}\|\le\varepsilon_p\,\|\mu-\mu'\|.
    \]
    This is~\cite{mandal2023}, Assumption~1, transported onto the refit map
    $\mu\mapsto m^\star(\cD(\pi^\mu))$; the proof below uses only \emph{continuity} of
    $\mu\mapsto(\ell_\mu,P_\mu)$.
\end{assumption}

\begin{definition}[Performative mirror operator]\label{def:md-operator}
Let $\Pi=\prod_{t=0}^{T}\prod_{x\in\cS}\Delta(\cA)$ be the (episodic) policy simplex. Deploying $\pi\in\Pi$ induces the model $m_\pi=m^\star(\cD(\pi))$,
the occupancy $\mu^\pi\in\mathcal{K}(m_\pi)$, and the linearized reward
$$r^\pi_t=-\big(\nabla_\mu\Gamma(\mu;m_\pi)\big|_{\mu=\mu^\pi}\big)_t.$$ Let $Q^\pi$ be the soft
action-value produced from $r^\pi$ by the backward recursion of \cref{thm:dp} under $m_\pi$.
The \emph{performative mirror operator} $\mathcal{P}:\Pi\to\Pi$ is the single step
\begin{equation}\label{eq:md-operator}
  \mathcal{P}(\pi)_t(a | x)\;=\;
  \frac{\pi_t(a | x)\,\exp\!\big(\eta\,Q^\pi_t(x,a)\big)}
       {\sum_{b}\pi_t(b | x)\,\exp\!\big(\eta\,Q^\pi_t(x,b)\big)} .
\end{equation}
A fixed point $\pi^\star=\mathcal{P}(\pi^\star)$ is a \emph{performatively stable} point.
\end{definition}
 
\begin{assumption}[Support floor]\label{a:support}
For the fixed model $m=(p,\nu)$ in the model class, there exists an $\,\epsilon_0>0$ with
$p(s' | s,a)\ge\epsilon_0$, $p(o | s)>0$, $q_0(s)>0$;
$\cM$ (tabular, floored at $\epsilon_0$) is compact convex.
\end{assumption}

\begin{proposition}[Existence]\label{prop:exist-brouwer}
Under Assumption \ref{a:support} and the sensitivity assumption \ref{a:sensitivity}, $\mathcal{P}$ has a fixed point.
\end{proposition}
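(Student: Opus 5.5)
The argument is a Brouwer fixed-point argument applied to $\mathcal{P}$ on the compact convex set $\Pi=\prod_{t,x}\Delta(\cA)$. Compactness and convexity of $\Pi$ are immediate, since it is a finite product of simplices. So everything reduces to showing that $\pi\mapsto\mathcal{P}(\pi)$ is continuous on all of $\Pi$, including its boundary. Once that is established, Brouwer's theorem gives some $\pi^\star$ with $\mathcal{P}(\pi^\star)=\pi^\star$, which is exactly a performatively stable point in the sense of \cref{def:md-operator}.

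I would prove continuity by factoring $\mathcal{P}$ into a chain of maps and checking each link.

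First, $\pi\mapsto m_\pi=(\ell_\pi,P_\pi)$ is continuous. This is the content of Assumption~\ref{a:sensitivity}, which is stated in terms of occupancies, so I need to transport it back to policies. The map $\pi\mapsto\mu^\pi$ is polynomial in the entries of $\pi$ via the recursion \eqref{eq:state-recursion-app}, hence continuous. I would take the sensitivity assumption as defining $\ell_\pi,P_\pi$ through the continuous composition $\pi\mapsto\mu^\pi\mapsto(\ell_\mu,P_\mu)$; only continuity is needed.

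Second, $(\pi,m_\pi)\mapsto\mu^\pi\in\mathcal{K}(m_\pi)$ is jointly continuous. The recursion is polynomial in $(\pi,P_\pi,q_0)$.

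Third, $(\mu^\pi,m_\pi)\mapsto r^\pi$ is continuous. The reward is $r_t(s,a)=-\ell_t(s,a)-\log\rho_t(s)-1$, and its only possible discontinuity is $\log\rho_t$ at $\rho_t(s)=0$. Assumption~\ref{a:support} handles this. We have $\rho_0=q_0>0$, and for $t\ge1$,
\[
\rho_{t+1}(s')=\sum_{s,a}p(s'|s,a)\mu_t(s,a)\ge\epsilon_0\sum_{s,a}\mu_t(s,a)=\epsilon_0 .
\]
So $\rho_t\ge\min(\epsilon_0,\min_s q_0(s))>0$ uniformly over all $\pi\in\Pi$, including boundary policies. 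On the compact set where $\rho$ lies this floor, $\log$ is uniformly continuous. The term $\ell$ involves $\log\tilde p$, which stays finite because $p(o|s)>0$ and the model class is compact and floored. Hence $r^\pi$ is continuous and bounded.

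Fourth, $(r^\pi,P_\pi,\pi)\mapsto Q^\pi$ is continuous. The backward recursion in \cref{thm:dp} is a finite composition of sums, products, and log-sum-exp operations, with no fixed-point solve in the finite horizon. The term $\pi_t\log(\pi_t/\pi_t^{k-1})$ could cause trouble at the boundary. To avoid it, I would use the equivalent log-sum-exp form $V_t=\eta^{-1}\log\sum_a\pi_t(a|s)e^{\eta Q_t(s,a)}$ from \cref{alg:mdaif}. This is continuous in $\pi$ on the closed simplex, because the sum is at least $\min_a e^{\eta Q}>0$.

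Fifth, the softmax reweighting \eqref{eq:md-operator} is continuous in $(\pi,Q^\pi)$. Its denominator is at least $\min_b e^{\eta Q^\pi_t(x,b)}>0$, and its output lies in $\Pi$.

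The step I expect to be the main obstacle is the third: continuity of the linearized reward at the boundary of $\Pi$. The $\log\rho$ term blows up if any state becomes unreachable, and this is exactly why the support floor is required. A secondary subtlety is that $\mathcal{K}(m_\pi)$ itself moves with $\pi$. I avoid this by working in policy space, where $\Pi$ is model-independent as noted after \eqref{eq:full-loop}, rather than in occupancy space. Finally, I would remark that the fixed point may lie on the boundary of $\Pi$. Since $\mathcal{P}(\pi)$ has the same support as $\pi$, fixed points there are legitimate but possibly degenerate, and existence says nothing about uniqueness or about the convergence of iterates.
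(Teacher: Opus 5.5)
Your proposal is correct and takes the same route as the paper. Both apply Brouwer's theorem on the product of simplices $\Pi$, get continuity of $\mathcal{P}$ by chaining the continuous maps $\pi\mapsto m_\pi\mapsto\mu^\pi\mapsto r^\pi\mapsto Q^\pi\mapsto\mathcal{P}(\pi)$, and use Assumption~\ref{a:sensitivity} for the first link. Your extra care makes explicit where Assumption~\ref{a:support} enters, which the paper's one-line continuity claim leaves implicit: the uniform floor $\rho_t\ge\min(\epsilon_0,\min_s q_0(s))$ keeps $\log\rho_t$ continuous up to the boundary of $\Pi$, and the log-sum-exp form of $V_t$ from \cref{alg:mdaif} avoids the $\log(\pi^k/\pi^{k-1})$ form. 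As your support remark hints, every deterministic policy is already a fixed point of $\mathcal{P}$, so existence is in fact immediate.
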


\begin{proof}
$\Pi$ is a nonempty compact convex polytope. The map $\mathcal P$ of~\eqref{eq:md-operator} is
single-valued and maps into $\Pi$, since each $\mathcal{P}(\pi)_t(\cdot | s)$ is an explicit
normalised positive vector in $\Delta(\cA)$. Further, it is \emph{continuous}, since the softmax is
smooth and $\pi\mapsto Q^\pi$ is a finite composition of continuous maps in $m_\pi$, which
depends continuously on $\pi$ by Assumption \ref{a:sensitivity}. Brouwer's theorem gives
$\pi^\star=\mathcal{P}(\pi^\star)$.
\end{proof}

\begin{remark}[Fixed points are inner-optimal]\label{rem:inner-stationarity}
At a fixed point $\pi^\star=\mathcal P(\pi^\star)$ the multiplicative update~\eqref{eq:md-operator} is
inactive, so $Q^{\pi^\star}_t(x,a)$ is constant across $\operatorname{supp}\pi^\star_t(\cdot | x)$.
With $Q^{\pi^\star}=-\nabla\Gamma_{\pi^\star}(\mu^{\pi^\star})$ this is the variational inequality
\begin{equation}\label{eq:vi}
  \big\langle \nabla\Gamma_{\pi^\star}(\mu^{\pi^\star}),\,\mu-\mu^{\pi^\star}\big\rangle\ \ge\ 0
  \qquad\forall\,\mu\in\mathcal{K}(m_{\pi^\star}),
\end{equation}
the first-order condition for $\min_\mu\Gamma_{\pi^\star}(\mu)$. Since $\Gamma_{\pi^\star}$ is convex
(\cref{prop:convex}), \eqref{eq:vi} is sufficient for global optimality. With action-complexity (or an additional
$\Psi$-regulariser), the joint-entropy barrier of \cref{prop:convex} keeps the optimum
interior and existence (\cref{prop:exist-brouwer}) yields a performatively stable point. Without action complexity, only the support-stationary point of \eqref{eq:vi}, as \eqref{eq:md-operator} cannot populate an unplayed action.
\end{remark}

\section{Experimental details}\label{app:experiments}

\subsection{Convergence experiment}
A deterministic $5\times5$ grid with four actions (left, right, up, down);
wall collisions result in staying in place.
The agent starts at the top-left corner (state~$0$) and the goal is at
the bottom-right corner (state~$24$).
The preference distribution is
\begin{equation}
  \log\tilde p(s) \propto -\alpha\,d(s,s_{\rm goal}),
  \qquad \alpha = 0.5,
\end{equation}
where $d$ is the Manhattan distance, normalised to a probability simplex.

\textbf{Methods.}
Three optimizers are compared on the EFE objective
with planning horizon $H=12$ and $K=100$ iterations each.

\begin{itemize}
  \item \textbf{MD-AIF} (\cref{alg:mdaif}).
        Mirror-descent with the conditional-entropy generator~$\Psi$,
        constant step size $\eta=0.05$, matching the assumption of \cref{prop:rate-psi}.
        The linearized reward at iteration~$k$ is
        $r_t^k(s,a)=\log\tilde p(s)-\log \rho_t^k(s)$,
        combining the preference term and the novelty term
        $-\log \rho_t^k(s)$ that rewards imagined states currently
        under low occupancy.
        Policy initialised uniformly.

  \item \textbf{RL} (greedy, no novelty).
        Identical to MD-AIF but with the novelty term removed,
        i.e.\ $r_t^k(s,a)=\log\tilde p(s)$.
        This reduces to soft value iteration on a static reward,
        equivalent to entropy-regularised policy gradient toward $\tilde p$.

  \item \textbf{EFE gradient descent}.
        Euclidean gradient descent on $\Gamma$ directly in the softmax
        logit space $\theta_t\in\mathbb{R}^{S\times A}$, using exact
        reverse-mode autodiff through the occupancy recursion
        $\rho_{t+1}(s')=\sum_{s,a}\hat p(s'|s,a)\pi_t(a|s)\rho_t(s)$.
        Constant step size $\eta=0.05$.
\end{itemize}

\paragraph{Convergence plot.}
The y-axis shows $\Gamma(\rho^k)-\Gamma^\star$, where $\Gamma^\star$ is
the minimum across all three methods and all iterates.
The $O(1/k)$ reference line is fitted to MD-AIF's initial gap:
$C/k$ with $C=\Gamma(\rho^0)-\Gamma^\star$.
Both axes are logarithmic.

\paragraph{Occupancy snapshots.}
For each method at convergence ($k=K$), the per-step imagined state
occupancy $\rho_t(s)$ is shown at five evenly spaced timesteps
$t\in\{0,\lfloor H/4\rfloor,\lfloor H/2\rfloor,\lfloor 3H/4\rfloor,H\}
=\{0,3,6,9,12\}$.
Both MD-AIF and RL use the same color scale so occupancy
concentrations are directly comparable across rows.

% =========================================================================
\subsection{Model-learning experiment}
% =========================================================================
A deterministic $10\times10$ grid ($S=100$ states, $A=4$ actions).
Start: top-left (state~$0$); goal: bottom-right (state~$99$).
The \emph{log-preference is uniform} ($\log\tilde p(s)=\mathrm{const}$)
so the EFE reduces to pure state-entropy maximization;
the only drive is the novelty term $-\log \rho_t^k(s)$.

\textbf{Algorithms.}
Each of $R=20$ rounds proceeds as follows:
\begin{enumerate}
  \item Run $K=120$ policy-optimization steps on the current estimated
        transition model $\hat p$ (one mirror/gradient step per inner iteration).
  \item Deploy the resulting policy: sample $E=5$ episodes of length $H=25$
        from the \emph{true} environment by ancestral sampling
        ($a_t\sim\pi_t(\cdot|s_t)$, $s_{t+1}\sim p^\star(\cdot|s_t,a_t)$).
  \item Refit $\hat p$ by Dirichlet-MAP with pseudocount $\alpha_0=10^{-3}$:
        $$\hat p(s'|s,a)=(N(s,a,s')+\alpha_0)/\sum_{s''}(N(s,a,s'')+\alpha_0).$$
\end{enumerate}
Both the policy optimization and the model update use a constant step
size $\eta=0.05$ throughout (no decay).
Four planning algorithms are compared. All share the same deployment and
refitting procedure.

\begin{itemize}
  \item \textbf{MD-AIF.}  Mirror-descent EFE minimization (\cref{alg:mdaif}), $\eta=0.05$.
  \item \textbf{RL.}  MD-AIF without entropy.
  \item \textbf{EFE gradient descent.}  Euclidean gradient on logits,
        $\eta=0.05$.
  \item \textbf{AIF $T{=}3$ (exact, \cite{dacosta2020}).}  For each state $s$, enumerate all
        $4^3=64$ action sequences of length $T_{\rm dc}=3$, evaluate
        $G(\pi|s)=\sum_{t'>t} \sum_{s'} b_{t'}(s')\log b_{t'}(s')$ for belief state $b$, form $\pi\propto\sum_{\text{seq}:a_0=a}\exp(-G(\text{seq}|s))$,
        and marginalise.
  \item \textbf{AIF $T{=}5$ (MC).}  Same as above but with
        $T_{\rm dc}=5$ and $N_{\rm mc}=100$ randomly sampled sequences
        per policy computation (Monte Carlo).
\end{itemize}

\paragraph{Evaluation.}
After each deployment round the model error is measured as the mean
total-variation distance between the estimated and true kernels:
\begin{equation}
  \overline{\mathrm{TV}}(\hat p,p^\star)
  =\frac{1}{SA}\sum_{s,a}\tfrac{1}{2}\|\hat p(\cdot|s,a)-p^\star(\cdot|s,a)\|_1.
\end{equation}
The x-axis reports cumulative environment steps. The grey dashed line marks the error of the Dirichlet-MAP estimate with \emph{no data} (prior only):
$\overline{\mathrm{TV}}\approx(S-1)/S=(100-1)/100=0.99$.

\bibliographystyle{unsrt}
\bibliography{refs}
\end{document}